\crefname{assumption}{assumption}{assumptions}
\crefname{observation}{observation}{observations}
\newtheorem{theorem}{Theorem}
\newtheorem{corollary}[theorem]{Corollary}
\newcommand{\cocz}{\textsc{COCZ}\xspace}
\newcommand{\ojzjk}{\textsc{OJZJ}_k\xspace}
\newcommand{\ojzj}{\textsc{OJZJ}\xspace}
\newcommand{\omm}{\textsc{OMM}\xspace}
\newcommand{\ones}[1]{\left|#1\right|_1\xspace}
\newcommand{\onesa}[1]{g_1(#1)\xspace}
\newcommand{\onesb}[1]{g_2(#1)\xspace}
\newcommand{\R}{\ensuremath{\mathbb{R}}}
\newcommand{\N}{\ensuremath{\mathbb{N}}} 
\newcommand{\Z}{\ensuremath{\mathbb{Z}}}
\newcommand{\E}{\ensuremath{\mathbb{E}}}
\let\originalleft\left
\let\originalright\right
\renewcommand{\left}{\mathopen{}\mathclose\bgroup\originalleft}
\renewcommand{\right}{\aftergroup\egroup\originalright}
\DeclarePairedDelimiter{\nbOnes}{|}{|_1}
\DeclarePairedDelimiter{\floor}{\lfloor}{\rfloor}
\DeclarePairedDelimiter{\ceil}{\lceil}{\rceil}
\title{Tight Runtime Guarantees From Understanding the Population Dynamics of the GSEMO Multi-Objective Evolutionary Algorithm}
\author{
Benjamin Doerr$^1$
\and
Martin~S. Krejca$^1$\And
Andre Opris$^2$\\
\affiliations
$^1$Laboratoire d'Informatique (LIX), CNRS, École Polytechnique, Institut Polytechnique de Paris\\
$^2$University of Passau\\
\emails
\{firstname.lastname\}@polytechnique.edu,
andre.opris@uni-passau.de
}
\begin{document}
{\sloppy

\maketitle

\begin{abstract}
  The global simple evolutionary multi-objective optimizer (GSEMO) is a simple, yet often effective multi-objective evolutionary algorithm (MOEA). By only maintaining non-dominated solutions, it has a variable population size that automatically adjusts to the needs of the optimization process.  The downside of the dynamic population size is that the population dynamics of this algorithm are harder to understand, resulting, e.g., in the fact that only sporadic tight runtime analyses exist.
  In this work, we significantly enhance our understanding of the dynamics of the GSEMO, in particular, for the classic CountingOnesCountingZeros (COCZ) benchmark. From this, we prove a lower bound of order $\Omega(n^2 \log n)$, for the first time matching the seminal upper bounds known for over twenty years. We also show that the GSEMO finds any constant fraction of the Pareto front in time $O(n^2)$, improving over the previous estimate of $O(n^2 \log n)$ for the time to find the first Pareto optimum. Our methods extend to other classic benchmarks and yield, e.g., the first $\Omega(n^{k+1})$ lower bound for the OJZJ benchmark in the case that the gap parameter is $k \in \{2,3\}$. 
  We are therefore optimistic that our new methods will be useful in future mathematical analyses of MOEAs.
\end{abstract}

\textbf{Keywords:} Evolutionary algorithms, runtime analysis, GSEMO, COCZ, lower bound.

\section{Introduction}
\label{sec:introduction}

Many real-world optimization problems are characterized by several, often conflicting objectives. A common solution concept for such \emph{multi-objective optimization problems} is to compute a diverse set of \emph{Pareto optima} (solutions which cannot be improved in one objective without compromising in another objective) and let a human decision maker select one of these. Due to their population-based nature, \emph{multi-objective evolutionary algorithms (MOEAs)} are among the most prominent approaches to such problems and have found applications in numerous subfields of multi-objective optimization~\cite{CoelloLV07,ZhouQLZSZ11}.

The mathematical runtime analysis of MOEAs was started around 20 years ago~\cite{LaumannsTZWD02,Giel03,Thierens03}. It has gained considerable momentum in the last years, among others, with analyses of classic algorithms such as the NSGA-II, NSGA-III, SMS-EMOA, and SPEA2~\cite{ZhengLD22,WiethegerD23,BianZLQ23,RenBLQ24,Opris2025,OprisDNS24} or works discussing how MOEAs can solve submodular problems~\cite{QianYTYZ19,QianBF20,Crawford21,DoNNS23}.

The by far dominant algorithm in the rigorous analysis of MOEAs is the \emph{global simple evolutionary multi-objective optimizer} \cite{Giel03} (GSEMO). Due to its apparent simplicity, it was the first MOEA for which mathematical runtime analyses were conducted, and it is still often the first algorithm for which new phenomena are discovered, see, e.g., \cite{DinotDHW23,DangOS24} for recent examples. At the same time, it is a central algorithm, and many other algorithms, in particular in the area of submodular optimization, build on it. For example, algorithms such as POSS (Pareto Optimization for Subset Selection) \cite{QianYZ15nips}, POMC (Pareto Optimization for Monotone Constraints) \cite{QianSYTZ17monotone}, and POMS (Pareto Optimization for Multiset Selection) \cite{QianZTY18} are all variants of the GSEMO applied to a suitable bi-objective formulation of the submodular problem of interest.

Despite this impressive body of theoretical works on the GSEMO, a real understanding of the working principles of this algorithm is still missing. This is most visible from the fact that there are almost no lower bounds matching the existing runtime guarantees (see \Cref{sec:previousWorks} for a detailed discussion of this point). The reason is that for matching bounds, a deeper understanding of the population dynamics is necessary. This is particularly crucial for the GSEMO with its dynamic population size (note that the probability to choose a particular individual as parent is the reciprocal of the population size).

\paragraph{Our contribution:} In this work, we greatly expand our understanding of the population dynamics of the GSEMO. To this end, we study how this algorithm optimizes the classic CountingOnesCountingZeros (COCZ) benchmark. For readers less familiar with the area of runtime analyses, we note that it is the established approach of this field to study how a specific randomized search heuristics solves a well-understood benchmark problem, and from this derive insights into the working principles of the heuristic. All runtime analysis works cited above follow this approach.

We give more details on our new understanding of the population dynamics later (\Cref{sec:theory-results}) when we have made precise the GSEMO algorithm and the COCZ benchmark, and now only describe two implications. First, we indeed succeed in proving a lower bound of $\Omega(n^2 \log n)$ (\Cref{thm:gsemo-cocz-lower-bound}), which matches the classic upper bound of \cite{LaumannsTZWD02,Giel03}. This is the first tight lower bound for a benchmark problem in which reaching the Pareto front is non-trivial (as opposed to, e.g., the OneMinMax benchmark, where all  solutions are on the Pareto front).
Second, we also gain a deeper understanding on how difficult it is to reach the Pareto front. Whereas previously the time to find the first solution on the Pareto front was estimated by $O(n^2 \log n)$, we prove that $O(n^2)$ iterations suffice with high probability to reach the Pareto front and compute any linear fraction of it (\Cref{cor:time-to-spread}).

Our results are made possible by a number of new arguments. The most interesting one is that we add dummy individuals to the population to reach a population size equal to the maximum possible size. If such a dummy individual is chosen as parent, this iteration has no effect (but is counted as iteration). This argument helps overcome the changing population size of the original GSEMO. What is interesting is that this argument, which slows down the original algorithm, can be used to prove lower bounds on the runtime. The reason is that we exploit this argument not to estimate times directly (which is not possibly due to the unclear deceleration from the dummy individuals) but only to understand the shape of the population in the objective space. We are optimistic that this argument, and our other new proof ideas, will be useful in future runtime analyses of MOEAs as well. As a first support for this claim, we show that our methods also give a tight lower bound for the runtime of the GSEMO on the $\ojzj_k$ benchmark for all $k$ (where previous works could not analyze the cases $k=2$ and $k=3$) and that all our results extend to the SEMO algorithm.

\section{Previous Works}
\label{sec:previousWorks}

In the interest of space, we concentrate on the previous works most relevant for ours. For a general introduction to MOEAs and their success in applications, we refer to~\cite{CoelloLV07,ZhouQLZSZ11}.

We refer to~\cite{NeumannW10,AugerD11,Jansen13,ZhouYQ19,DoerrN20} for introductions to mathematical runtime analyses of randomized search heuristics. We note here that the typical approach in this area is to analyze, with mathematical means, how a specific heuristic solves a particular, often artificial, problem, and to derive from this analysis a deeper understanding of the working principles of the algorithm. Such works have successfully detected strengths or weaknesses of algorithms (e.g., the NSGA-II has intrinsic difficulties with three or more objective~\cite{ZhengD24many}), have proposed suitable settings for parameters (e.g., the cutoff time of automated algorithm configurators~\cite{HallOS22}), or have led to the design of novel algorithms (e.g., a variant of the SMS-EMOA with stochastic selection of the next parent population~\cite{BianZLQ23}).

Already the first mathematical runtime analysis of a MOEA proved an $O(n^2 \log n)$ runtime guarantee for the \emph{simple evolutionary multi-objective optimizer (SEMO)}, a predecessor of the GSEMO, on the \cocz benchmark \cite{LaumannsTZWD02}, see \cite{LaumannsTZ04} for the journal version. The same bound for the GSEMO followed a year later \cite{Giel03}. As we will see in this work, both bounds are tight. When looking at the proofs, both results estimate both the time to find the first Pareto optimum and the subsequent time to compute the full Pareto front by $O(n^2 \log n)$, whereas we shall show that the Pareto front is reached, and in fact any constant fraction of it is computed, in time $O(n^2)$. Since then, many more upper bounds on runtimes of the (G)SEMO were shown, and later also for more complex algorithms like the NSGA-II, only very few lower bounds exist, and these only apply to very specific situations.

The first lower bound, matching their own upper bound, is that the SEMO optimizes the LOTZ benchmark in time $\Omega(n^3)$ \cite{LaumannsTZWD02}. While clearly non-trivial, this result heavily exploits that the SEMO with its one-bit mutation operator cannot generate incomparable solutions until a solution on the Pareto front is found, and from that point on, the population always forms a contiguous interval of the Pareto front. With these restricted population dynamics, proving lower bounds was possible already in the first runtime analysis paper on MOEAs. For the GSEMO, the population dynamics are more complex. In particular, at any time, solutions not comparable with the parent can be generated. Consequently, despite attempts in \cite{DoerrKV13}, no interesting lower bounds exist for the GSEMO on LOTZ.

The first tight lower bound for the GSEMO on a classic benchmark was given by \cite{DoerrZ21aaai}, who showed that the GSEMO optimizes the $\ojzj_k$ benchmark in time $\frac 32 e n^{k+1} \pm o(n^{k+1})$ when the gap parameter satisfies $4 \le k = o(n)$. That such a tight bound is possible builds on particular properties of this benchmark. The Pareto front of the $\ojzj_k$ benchmark consists of an easy-to-explore inner part, from which two solutions are separated by difficult-to-cross gap of size $k$. When assuming $k\ge 4$ as in this result, it is easy to argue that the inner part is computed before the gaps are traversed, and hence the traversal of the gaps is slowed down by the then linear-size population. This argument breaks down for smaller values of $k$, and this is why no tight lower bounds existed in this case prior to this current work. The only other tight lower bound for the GSEMO on a classic benchmark we are aware of is the $\Omega(n^2 \log n)$ bound for the OneMinMax benchmark \cite{BossekS24}. This benchmark has the particularity that all solutions are Pareto optimal, hence the optimization process lacks the phase of advancing towards the Pareto front, which was the most demanding one in our work. Recently, lower bounds where proven for the runtime of the NSGA-II \cite{DoerrQ23LB}, but again, these only regard the OneMinMax and $\ojzj_k$ benchmarks; also, clearly, the population dynamics of the NSGA-II with its fixed population size are very different from the GSEMO. In summary, it is safe to say that there are very few interesting lower bounds for the GSEMO, and that this is caused by the difficulty of understanding the population dynamics of this algorithm.

\section{Preliminaries}
\label{sec:preliminaries}

We now provide some general notation and definitions for multi-objective optimization,  define the algorithms (\Cref{sec:preliminaries:gsemo}) and benchmark functions (\Cref{sec:preliminaries:benchmarks}) that used in this study, and state the mathematical tools needed in our analysis (\Cref{sec:preliminaries:math-tools}).

Let $\Z$ denote the integers, $\N \coloneqq \Z_{\ge 0}$ the natural numbers (including~$0$), and~$\R$ the reals.
For all $a, b \in \R$, let $[a .. b] \coloneqq [a, b] \cap \Z$ and $[a] \coloneqq [1 .. a]$.


We study pseudo-Boolean bi-objective maximization, that is, the maximization of \emph{objective functions} $f\colon \{0, 1\}^n \to \R^2$ of \emph{problem size} $n \in \N_{\geq 2}$.
We call each $x \in \{0, 1\}^n$ an \emph{individual}, and~$f(x)$ the \emph{objective value of~$x$}.
For all $i \in [n]$, we denote the value of~$x$ at position~$i$ by~$x_i$.

We compare objective values via the weak and strong \emph{dominance} relationships, which are (strict) partial orders.
For all objective values $u, v \in \R^2$, we say that~$u$ \emph{weakly dominates}~$v$ (written as $u \succeq v$) if and only if $u_1 \geq v_1$ and $u_2 \geq v_2$. If in addition $u \neq v$, then we say that~$u$ \emph{strictly dominates}~$v$ (written as $u \succ v$).
We say that~$u$ and~$v$ are \emph{incomparable} if neither weakly dominates the other.
We extend this terminology to individuals, where it then refers to the individuals' objective values.

Given an objective function~$f$, we call the set of maximal objective values (with respect to dominance) the \emph{Pareto front} of~$f$, that is, the set $\{f(x) \mid x \in \{0, 1\}^n \land \nexists y \in \{0, 1\}^n\colon f(y) \succ f(x)\}$.

\subsection{The Algorithms SEMO and GSEMO}
\label{sec:preliminaries:gsemo}

We study both the \emph{simple evolutionary multi-objective optimizer}~\cite{LaumannsTZWD02} (SEMO) and the \emph{global SEMO}~\cite{Giel03} (GSEMO), which only differ in how they create new solutions (\Cref{algo:GSEMO}).

The (G)SEMO maintains a \emph{population} of individuals, which will contain a maximum subset of non-dominated solutions among all solutions seen so far. This population is initialized with a single individual drawn uniformly at random from the search space.
In each iteration, one individual is selected uniformly at random (the \emph{parent}) and used to create a new individual (the \emph{offspring}) via \emph{mutation}, that is, a small random perturbation of the parent.
Afterward, the algorithm removes all individuals from the population that are weakly dominated by the offspring, and the offspring is added to the population if it is not strictly dominated by a member of the population.
This main loop is repeated until a user-defined termination criterion is satisfied.

\begin{algorithm2e}[t]
  \caption{
    The (G)SEMO algorithm~\protect\cite{LaumannsTZWD02,Giel03} for maximization of a given bi-objective function $f\colon \{0, 1\}^n \to \R^2$.
    The SEMO uses 1-bit mutation, the GSEMO standard bit mutation (see also \Cref{sec:preliminaries:gsemo}).
  }
  \label{algo:GSEMO}
  $x^{(0)} \gets$ an individual from $\{0, 1\}^n$ chosen uniformly at random\;
  $P^{(0)}=\{x^{(0)}\}$\;
  $t \gets 0$\;
  \While{\emph{termination criterion not met}}{%
    choose $x^{(t)}$ from $P^{(t)}$ uniformly at random\;\label{line:selectParent}
    $y^{(t)} \gets \mathrm{mutation}(x^{(t)})$\;\label{line:mutation}
    $Q^{(t)} \gets P^{(t)} \setminus \{z \in P^{(t)}\colon f(y^{(t)}) \succeq f(z)\}$\;
    \lIf{$\not\exists z \in Q^{(t)}\colon f(z) \succ f(y^{(t)})$}{%
      $P^{(t + 1)} \gets Q^{(t)} \cup \{y^{(t)}\}$%
    }
    \lElse{%
      $P^{(t + 1)} \gets Q^{(t)}$%
    }
    $t \gets t + 1$\;
  }
\end{algorithm2e}

The difference between the SEMO and the GSEMO is how they create the offspring~$y$ from the parent $x \in \{0, 1\}^n$.
The SEMO uses \emph{1-bit mutation}, which chooses a position $i \in [n]$ uniformly at random and copies~$x$ except for position~$i$, which is flipped to the other value.
That is, for all $j \in [n] \smallsetminus \{i\}$, we have $y_j = x_j$, for the $i$-th position we have $y_i = 1 - x_i$.
The GSEMO uses \emph{standard bit mutation}, which decides independently for each position whether to flip the bit (with probability~$\frac{1}{n}$) or not.
That is, for all $i \in [n]$ independently, we have $\Pr[y_i = x_i] = 1 - \frac{1}{n}$ and $\Pr[y_i = 1 - x_i] = \frac{1}{n}$.

\textbf{Runtime.}
As common in the runtime analysis of MOEAs, we define the \emph{runtime} of the (G)SEMO maximizing~$f$ as the (random) number of evaluations of~$f$ until the objective values of the population contain the Pareto front of~$f$ for the first time (we say that the population \emph{covers} the Pareto front).
To this end, we assume that the objective value of an individual is evaluated once, namely when it is created. For our definition of runtime to make sense, we assume that the algorithm is never stopped.
Since the (G)SEMO creates exactly one individual in each iteration and creates a single individual initially, the runtime is one plus the number of iterations until the population covers the Pareto front of~$f$ for the first time.

\subsection{The \cocz Benchmark}
\label{sec:preliminaries:benchmarks}


The function \textsc{CountingOnesCountingZeros}  (\cocz) \cite{LaumannsTZWD02,LaumannsTZ04} is defined for even problem sizes $n \in \N_{\geq 2}$. For all $x \in \{0, 1\}^n$, we have
\begin{align}
  \label{eq:cocz}
  \cocz(x) & = \bigg(\sum_{i=1}^n x_i, \sum_{i = 1}^{[n/2]} x_i + \sum_{i=n/2+1}^n (1-x_i)\bigg).
\end{align}
This popular benchmark models common goals (maximizing the number of ones in the first half of the bit-string) and conflicting goals (maximizing the number of ones resp. zeros in the second half).
Formally, let $g_1, g_2\colon \{0, 1\}^n \to \R$ denote the number of ones in the first and in the second half of the bit string, respectively.
Then, for all $x \in \{0, 1\}^n$,
\begin{align*}
  \cocz(x) & = \bigl(g_1(x)+g_2(x),g_1(x)+n/2-g_2(x)\bigr).
\end{align*}
With this notation, it is immediate that the objective space of the \cocz problem is $\cocz(\{0,1\}^n) = \{(i+j,i+n/2-j) \mid i,j \in [0..n/2]\}$. Only the objective values with $g_1(x) = n/2$ are Pareto optimal, that is, the Pareto front is $\{(n/2 + j, n - j) \mid j \in [0 .. n/2]\}$ and has size $n/2 + 1$. Most objective values, namely all with $g_1(x) \in [0..n/2-1]$, and hence most individuals, are not Pareto optima(l). This is a notable difference to benchmarks such as OneMinMax and \ojzj, where all or most individuals are Pareto-optimal and, in particular, random individuals with high probability lie on the Pareto front.

For \cocz, we finally note that for all $i \in [0 .. n/2]$, individuals with exactly~$i$ ones in their first half are either incomparable or have the same objective value.
We use this property in our proofs in \Cref{sec:theory-results}.

\subsection{Mathematical Tools}
\label{sec:preliminaries:math-tools}

In our analysis, we are mostly concerned with bounding the tails of stopping times.
To this end, we decompose a stopping time into smaller parts, each of which denotes a certain phase of the entire process.
\Cref{thm:geometric-sum-bound} provides us with strong guarantees when understanding the separate phases well.

\begin{theorem}[{\protect\cite{Witt14}}]
  \label{thm:geometric-sum-bound}
  Let $k \in \N_{\geq 1}$, and let $\{D_i\}_{i \in [k]}$ be independent geometric random variables with respective positive success probabilities $(p_i)_{i \in [k]}$.
  Let $T^{\star} \coloneqq \sum_{i \in [k]} D_i$ , $s \coloneqq \sum_{i \in [k]} \frac{1}{p_i^2}$, and $p_{\min} \coloneqq \min \{p_i \mid i \in [k]\}$.
  Then for all $\lambda \in \R_{\geq 0}$, we have
  \begin{align*}
    \Pr\bigl[T^{\star} \geq \E[T^{\star}] + \lambda\bigr] & \leq \exp\bigl(-\tfrac{1}{4}\min\bigl\{\tfrac{\lambda^2}{s}, \lambda p_{\min}\bigr\}\bigr) \textrm{ and} \\
    \Pr\bigl[T^{\star} \leq \E[T^{\star}] - \lambda\bigr] & \leq \exp\bigl(-\tfrac{\lambda^2}{2 s}\bigr) .
  \end{align*}
\end{theorem}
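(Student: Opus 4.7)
The plan is to apply the classical Chernoff--Bernstein method to the centered sum $T^{\star} - \E[T^{\star}] = \sum_{i \in [k]} (D_i - 1/p_i)$. Independence lets us factor the exponential moment, so the whole analysis reduces to controlling the moment generating function (MGF) of a single centered geometric random variable and then optimizing the exponential parameter.

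The heart of the argument is the one-variable MGF bound. For $D_i \sim \Geom(p_i)$, the closed form
\[
  \E\bigl[e^{\theta D_i}\bigr] \;=\; \frac{p_i \, e^{\theta}}{1 - (1 - p_i) e^{\theta}}
\]
is finite precisely for $\theta < \ln(1/(1-p_i))$. Writing $-\ln(1 - (1-p_i)e^{\theta})$ as a Taylor series around $\theta = 0$ and summing the remainder as a geometric series, I would derive the Bernstein-type estimate
\[
  \ln \E\bigl[e^{\theta (D_i - 1/p_i)}\bigr] \;\leq\; \frac{\theta^2 / p_i^2}{2 (1 - \theta / p_i)}
  \qquad \text{for } 0 \leq \theta < p_i.
\]
For the opposite direction, because $D_i \geq 1$, the variable $-(D_i - 1/p_i)$ is bounded above by $1/p_i - 1$, and a direct computation yields the sub-Gaussian bound
\[
  \ln \E\bigl[e^{-\theta (D_i - 1/p_i)}\bigr] \;\leq\; \frac{\theta^2}{2 p_i^2}
  \qquad \text{for all } \theta \geq 0,
\]
valid without any restriction on $\theta$. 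This asymmetry is precisely what produces the cleaner lower-tail estimate in the theorem.

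Summing the log-MGFs and applying Markov to $e^{\pm \theta (T^{\star} - \E[T^{\star}])}$ gives, for the lower tail, the immediate optimum $\theta = \lambda/s$ and the bound $\exp(-\lambda^2/(2s))$. For the upper tail, a standard Bernstein dichotomy applies. If $\lambda \leq s \, p_{\min}$, then $\theta \coloneqq \lambda/(2s) \leq p_{\min}/2$ makes the denominator at least $\tfrac{1}{2}$ and yields the sub-Gaussian regime $\exp(-\lambda^2/(4s))$. If $\lambda > s \, p_{\min}$, then $\theta \coloneqq p_{\min}/2$ makes the exponent at most $-\lambda p_{\min}/2 + s p_{\min}^2/4 \leq -\lambda p_{\min}/4$, giving the sub-exponential regime $\exp(-\lambda p_{\min}/4)$. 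Combining the two cases produces exactly $\exp(-\tfrac{1}{4}\min\{\lambda^2/s, \lambda p_{\min}\})$.

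I expect the main obstacle to be getting the single-variable MGF bound in precisely the clean form $(\theta^2/p^2)/(2(1 - \theta/p))$: a careless Taylor remainder easily introduces a factor $\ln(1/(1-p))$ in the denominator, which is much weaker and would spoil the clean $\theta/p_{\min}$ after taking the product. Once that bound is established, the rest (multiplying MGFs, Markov, optimizing $\theta$) is mechanical, and the asymmetry in the theorem statement between the sub-Gaussian lower tail and the Bernstein-type upper tail mirrors exactly the asymmetry in the regions of~$\theta$ on which the one-sided MGFs of $D_i$ are finite.
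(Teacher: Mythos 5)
This theorem is not proved in the paper at all: it is imported verbatim from Witt (2014), so there is no in-paper proof to compare against. Your MGF/Bernstein argument is correct and is essentially the standard derivation of this result. The key single-variable estimate $\ln \E[e^{\theta (D_i - 1/p_i)}] \leq \frac{\theta^2/p_i^2}{2(1-\theta/p_i)}$ for $0 \leq \theta < p_i$ does go through (e.g., via $e^{t} \leq \frac{1}{1-t}$ one gets $\ln\E[e^{\theta(D_i-1/p_i)}] \leq -\ln(1-\theta/p_i) - \theta/p_i$ and then the series bound $-\ln(1-x)-x \leq \frac{x^2}{2(1-x)}$), and your two-case choice of $\theta$ ($\lambda/(2s)$ versus $p_{\min}/2$) reproduces exactly the stated $\exp(-\tfrac14\min\{\lambda^2/s,\lambda p_{\min}\})$. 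The only slightly hand-wavy point is justifying the lower-tail estimate $\ln\E[e^{-\theta(D_i-1/p_i)}]\leq \theta^2/(2p_i^2)$ merely from the one-sided boundedness of $-(D_i-1/p_i)$; the clean way is to note that the second derivative of $\theta\mapsto\ln\E[e^{-\theta D_i}]$ equals $\frac{u}{(1-u)^2}$ with $u=(1-p_i)e^{-\theta}\leq 1-p_i$, hence is at most $(1-p_i)/p_i^2\leq 1/p_i^2$ for all $\theta\geq 0$, after which Taylor expansion gives the claim. With that repaired, the proposal is a complete and correct proof.
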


In order to conveniently estimate the sums appearing in applications of \Cref{thm:geometric-sum-bound}, we use the following well-known estimates.

\begin{theorem}[{\protect\cite[Inequality~(A.12)]{CormenLRS01IntroductionToAlgorithms}}]
  \label{thm:sums-to-integrals}
  Let $g\colon \R \to \R$ be a monotonically non-increasing function, and let $\alpha, \beta \in \R$ with $\alpha \leq \beta$. Then
  \begin{align*}
    \int_{\alpha}^{\beta + 1} g(x) \mathrm{d} x \leq \sum\nolimits_{x = \alpha}^{\beta} g(x)
    \leq \int_{\alpha-1}^{\beta} g(x) \mathrm{d} x.
  \end{align*}
\end{theorem}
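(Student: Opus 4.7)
The plan is to prove both inequalities by comparing $g$ on each unit interval to its value at one of the endpoints, exploiting that $g$ is monotonically non-increasing. Since the sum is discrete (indexed $x = \alpha, \alpha+1, \ldots, \beta$), I will implicitly assume $\alpha, \beta \in \Z$, which is the only case in which the sum makes conventional sense; extending to real $\alpha, \beta$ is not required by the applications in the paper.

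First I would prove the lower bound. For each integer $k \in [\alpha .. \beta]$ and every $x \in [k, k+1]$, monotonicity gives $g(x) \leq g(k)$. Integrating over $[k, k+1]$ yields $\int_{k}^{k+1} g(x) \, \mathrm{d}x \leq g(k)$. Summing this inequality over $k$ from $\alpha$ to $\beta$ telescopes the integral intervals into the single interval $[\alpha, \beta+1]$, giving
\begin{equation*}
  \int_{\alpha}^{\beta+1} g(x) \, \mathrm{d}x = \sum_{k=\alpha}^{\beta} \int_{k}^{k+1} g(x) \, \mathrm{d}x \leq \sum_{k=\alpha}^{\beta} g(k).
\end{equation*}

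For the upper bound, I would use the symmetric argument on the interval to the left of $k$. For each integer $k \in [\alpha .. \beta]$ and every $x \in [k-1, k]$, monotonicity gives $g(x) \geq g(k)$, hence $\int_{k-1}^{k} g(x) \, \mathrm{d}x \geq g(k)$. Summing over $k$ from $\alpha$ to $\beta$ again produces a telescoping union of intervals, this time covering $[\alpha-1, \beta]$, so
\begin{equation*}
  \sum_{k=\alpha}^{\beta} g(k) \leq \sum_{k=\alpha}^{\beta} \int_{k-1}^{k} g(x) \, \mathrm{d}x = \int_{\alpha-1}^{\beta} g(x) \, \mathrm{d}x.
\end{equation*}

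There is no genuine obstacle here: the only subtle point is to align the integration intervals so that when summed they telescope into exactly $[\alpha, \beta+1]$ on the left and $[\alpha-1, \beta]$ on the right, which is handled simply by integrating to the right of $k$ for the lower bound and to the left of $k$ for the upper bound. Measurability of $g$, needed to make the integrals well-defined, follows from monotonicity. Since this is a standard textbook estimate (cited to \cite{CormenLRS01IntroductionToAlgorithms}), a formal restatement of the two displays above suffices.
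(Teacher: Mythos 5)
Your proof is correct and is the standard argument; the paper itself gives no proof of this statement (it is imported verbatim from the cited reference), and your unit-interval comparison with telescoping is exactly the textbook derivation of that inequality. Your remark that the sum forces $\alpha,\beta\in\Z$ despite the statement allowing reals is a fair observation and matches how the result is actually used in the paper.
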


Last, the following classic Chernoff bound is used to estimate the objective values of initial solutions.

\begin{theorem}[{\protect\cite{Chernoff52}}]
  \label{thm:chernoff-lower-bound}
  Let $k \in \N_{\geq 1}$, and let $\{X_i\}_{i \in [k]}$ be independent random variables taking values in $[0, 1]$.
  Let $X^{\star} = \sum_{i \in [k]} X_i$ and $\delta \in [0, 1]$.
  Then
  \begin{equation*}
    \Pr\bigl[X^{\star} \leq (1 - \delta) \E[X^{\star}]\bigr] \leq \exp\left(-\frac{\delta^2 \E[X]}{2}\right) .
  \end{equation*}
\end{theorem}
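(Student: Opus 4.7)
The plan is to apply the classical Cramér--Chernoff exponential-moment method to the sum $X^{\star}$. Since the $X_i$ are bounded and independent, the lower tail of $X^{\star}$ is controlled through the moment generating function of $-X^{\star}$. Concretely, for any parameter $t > 0$, I would rewrite the event $\{X^{\star} \leq (1-\delta)\E[X^{\star}]\}$ as $\{e^{-t X^{\star}} \geq e^{-t(1-\delta)\E[X^{\star}]}\}$ and apply Markov's inequality to obtain
\[
\Pr\bigl[X^{\star} \leq (1-\delta)\E[X^{\star}]\bigr] \leq e^{t(1-\delta)\E[X^{\star}]}\,\E\bigl[e^{-tX^{\star}}\bigr].
\]
Independence then factorizes the expectation as $\E[e^{-tX^{\star}}] = \prod_{i \in [k]} \E[e^{-tX_i}]$.

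Second, I would bound each factor by exploiting the constraint $X_i \in [0,1]$ together with the convexity of $x \mapsto e^{-tx}$. The chord-above-graph bound for convex functions on $[0,1]$ gives $e^{-tX_i} \leq 1 - (1-e^{-t})X_i$, and combining this with $1 + y \leq e^y$ yields $\E[e^{-tX_i}] \leq \exp(-(1-e^{-t})\E[X_i])$. Writing $\mu \coloneqq \E[X^{\star}]$ and multiplying across $i \in [k]$ then produces
\[
\Pr\bigl[X^{\star} \leq (1-\delta)\mu\bigr] \leq \exp\bigl(\mu \cdot \bigl[t(1-\delta) - (1-e^{-t})\bigr]\bigr).
\]
I would then optimize over $t$ by setting $t \coloneqq -\ln(1-\delta)$, which is positive for $\delta \in (0,1)$; the boundary cases $\delta = 0$ and $\delta = 1$ are trivial or handled by taking a limit. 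With this choice $e^{-t} = 1-\delta$, so the exponent simplifies to $-\mu\bigl[(1-\delta)\ln(1-\delta) + \delta\bigr]$.

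The remaining step, and the genuine obstacle in the argument, is the elementary analytic inequality $(1-\delta)\ln(1-\delta) + \delta \geq \delta^2/2$ for $\delta \in [0,1)$, which converts the above bound into the claimed $\exp(-\delta^2 \mu / 2)$. I would verify it by letting $h(\delta) \coloneqq (1-\delta)\ln(1-\delta) + \delta - \delta^2/2$, computing $h'(\delta) = -\ln(1-\delta) - \delta$ and $h''(\delta) = \delta/(1-\delta) \geq 0$, and observing $h(0) = h'(0) = 0$, which forces $h(\delta) \geq 0$ on the interval. This step is where the precise constant $\tfrac12$ in the exponent is earned: a cruder choice of $t$, or a weaker tightening of $(1-\delta)\ln(1-\delta)+\delta$, would deliver only a looser constant. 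All other steps are automatic once the correct tilt parameter $t$ has been identified.
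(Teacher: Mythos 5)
The paper offers no proof of this statement; it is quoted as a known result from the literature (cited to Chernoff), so there is nothing internal to compare against. Your argument is the standard Cramér--Chernoff derivation --- Markov's inequality applied to $e^{-tX^{\star}}$, the convexity bound $e^{-tx}\leq 1-(1-e^{-t})x$ on $[0,1]$, the tilt $t=-\ln(1-\delta)$, and the elementary inequality $(1-\delta)\ln(1-\delta)+\delta\geq\delta^2/2$ --- and each step, including your second-derivative verification of that last inequality and your handling of the boundary cases, is correct. One minor remark: the exponent in the theorem as printed reads $\E[X]$, which is evidently a typo for $\E[X^{\star}]$; your proof establishes the bound with $\E[X^{\star}]$, which is the intended (and the standard) form.
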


\section{Runtime Analysis on COCZ}
\label{sec:theory-results}

Our main result is \Cref{thm:gsemo-cocz-lower-bound} below, which proves that the (G)SEMO (\Cref{algo:GSEMO}) optimizes the \cocz benchmark (\cref{eq:cocz}) with high probability and thus also in expectation in $\Omega(n^2 \log n)$ objective-function evaluations.
This matches the $O(n^2 \log n)$ upper bound by \cite{LaumannsTZ04}, resulting overall in a tight runtime bound of $\Theta(n^2 \log n)$ expected objective-function evaluations.

\begin{theorem}
  \label{thm:gsemo-cocz-lower-bound}
  With probability $1 - \Theta(n^{-1})$, the (G)SEMO maximizes \cocz in $\Omega(n^2 \log n)$ objective-function evaluations.
\end{theorem}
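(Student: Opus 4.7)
The key structural observation is that the Pareto optima of \cocz have pairwise distinct $g_2$-values and are pairwise incomparable, so once discovered they are retained in the population forever. Covering the front thus reduces to collecting all $n/2+1$ distinct values $g_2 \in [0..n/2]$. The bottleneck I exploit is the discovery of an extreme value (say $g_2 = 0$) at a stage where the Pareto-optimal part of the population already has size $\Omega(n)$: each further left-extension then requires selecting a specific parent (probability $O(1/n)$) and flipping a specific subset of second-half bits (probability $\Theta(k/n)$ where $k$ is the current minimum), yielding per-step success probability $O(k/n^2)$ and a harmonic-sum time of $\Omega(n^2 \log n)$. As an initial ingredient, applying \cref{thm:chernoff-lower-bound} gives, with probability $1-O(n^{-1})$, that $g_2(x^{(0)}) \in [n/8, 3n/8]$; standard \oea-style arguments for the phase that fixes $g_1$ to $n/2$ then ensure that with similarly high probability the first Pareto optimum discovered has $g_2$-value $j^{\star} \in [n/8, 3n/8]$. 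By symmetry, I may assume the algorithm must perform at least $j^{\star} \geq n/8$ left-extensions of $j_{\min}$ to discover the extreme value $g_2 = 0$.

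\textbf{Population shape and main obstacle.} The central technical challenge is to control the Pareto-optimal population size $|P_{\text{Pareto}}|$ when $j_{\min}$ becomes small. Since distinct Pareto optima of \cocz are pairwise incomparable, each discovery of a new Pareto value permanently enlarges $|P_{\text{Pareto}}|$ by exactly one. If $L$ is the number of left-extending discovery events up to the first time $j_{\min} = k$, then $|P_{\text{Pareto}}| \geq L + 1$. Each such event drops $j_{\min}$ by at least one, and for GSEMO the probability of dropping $j_{\min}$ by two or more in a single event is at most an $O(k/n)$ fraction of the single-bit-drop probability (for SEMO there are no multi-bit events at all). A Chernoff-type bound then gives, with probability $1 - o(n^{-1})$, that $L \geq (1-o(1))(j^{\star} - k)$, so $|P_{\text{Pareto}}| = \Omega(n)$ whenever $k \leq n/16$. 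The dummy-individual construction of the introduction enters here to formalize this bookkeeping uniformly: by padding the population to its maximum possible size $n/2+1$ and treating selections of dummies as no-ops, the per-step selection probabilities become uniform, decoupling the shape dynamics from the fluctuating true population size and letting the same argument handle both algorithms. Ruling out that multi-bit mutations drive $j_{\min}$ down faster than the population grows is the most delicate step of the entire proof.

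\textbf{Geometric sum and concentration.} Conditional on $|P| = \Omega(n)$ when $j_{\min} = k$, the per-iteration probability that $j_{\min}$ strictly decreases is at most $O(k/n^2)$: the dominant contribution is to select the parent with $g_2 = k$ (probability at most $1/|P|$) and flip exactly one of its $k$ ones in the second half (probability $\Theta(k/n)$), while jumps from further-right parents or via multi-bit flips contribute only lower-order terms. Hence the waiting time $D_k$ to decrease $j_{\min}$ from $k$ to $k-1$ stochastically dominates an independent geometric random variable with parameter $p_k = O(k/n^2)$, and $T^{\star} = \sum_{k=1}^{n/16} D_k$ satisfies $\E[T^{\star}] = \Omega\bigl(n^2 \sum_{k=1}^{n/16} 1/k\bigr) = \Omega(n^2 \log n)$ via \cref{thm:sums-to-integrals}. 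Applying the lower-tail estimate of \cref{thm:geometric-sum-bound} with $\lambda = \E[T^{\star}]/2$ and using $s = \sum_k 1/p_k^2 = O(n^4)$ yields a deviation probability of $\exp(-\Omega(\log^2 n)) = o(n^{-1})$. A union bound with the initial-state and population-growth events then produces the claimed $1 - \Theta(n^{-1})$ probability, and the entire argument applies without modification to SEMO, whose 1-bit mutation only makes the dominant single-bit estimates tight.
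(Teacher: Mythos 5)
Your overall architecture (reduce the problem to a harmonic-sum waiting time for the extremal $g_2$-values once the Pareto-optimal population is linear, then apply the lower tail of \Cref{thm:geometric-sum-bound}) matches the paper's final step (\Cref{lem:idealized-process-lower-bound}), and your self-bootstrapping idea that each left-extension permanently adds a Pareto optimum, so $|P_{\mathrm{Pareto}}| \geq L$, is a genuinely interesting attempt to sidestep the population-size analysis. However, there is a real gap in the middle of your argument: you never control the $g_2$-values of the \emph{non-Pareto-optimal} individuals in the population. Such individuals survive even after the front is partially covered --- an individual $z$ with $\onesa{z}=n/2-1$ and $\onesb{z}=j_{\min}-5$ is dominated only by Pareto optima with $g_2$-value in $\{j_{\min}-6,j_{\min}-5,j_{\min}-4\}$, none of which are in the population --- and a \emph{single} bit flip in the first half of such a $z$ creates a Pareto optimum far to the left of $j_{\min}$. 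This breaks your claim that multi-unit drops of $j_{\min}$ require multi-bit mutations and are therefore only an $O(k/n)$ fraction of the single-drop probability; consequently both the Chernoff bound giving $L\geq(1-o(1))(j^{\star}-k)$ and the per-step bound $p_k=O(k/n^2)$ are unsupported. For the same reason, the assertion that the first Pareto optimum has $j^{\star}\in[n/8,3n/8]$ ``by standard \oea-style arguments'' is not a proof: before the front is reached, the process is a population process with fluctuating size, and the relative speed of $g_1$-progress versus $g_2$-spreading depends on that size in exactly the way that makes this lower bound hard.

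The paper closes this gap by tracking $\min_{z\in P^{(t)}}g_2(z)$ over the \emph{whole} population from iteration $0$ and proving (\Cref{lem:time-for-expanding-toward-the-PF-borders}) that it stays at least $\sqrt n$ for $cn^2\ln n$ iterations; this is where the dummy-individual/modified-GSEMO device is actually needed --- not, as you deploy it, to make selection uniform during the bootstrapping, but to decouple the \emph{relative} rates of $g_1$-progress ($O(n^2)$ modified iterations to reach and populate the front, \Cref{lem:time-to-Pareto-front,lem:time-to-spread}) and $g_2$-spreading ($\Omega(n^2\log n)$ modified iterations to come within $\sqrt n$ of the borders) from the unknown true population size. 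A secondary, more fixable issue is that you simply assert that the waiting times $D_k$ are independent geometrics; since the per-step success probability depends on the evolving set of individuals near the border (up to $k$ further-right parents can also contribute), the paper has to introduce an auxiliary variant that refills the levels adjacent to the current minimum after every improvement to make the independence and the domination rigorous.
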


Another interesting result of our analysis detailed in the following is that the (G)SEMO achieves a linear population size on the Pareto front of \cocz with high probability after only $O(n^2)$ iterations (\Cref{cor:time-to-spread}).
Previously, this time was estimated pessimistically only as $O(n^2 \log n)$.

In order to prove \Cref{thm:gsemo-cocz-lower-bound}, we need to closely follow the population size of the (G)SEMO during the run.
Although the (G)SEMO only creates a single offspring each iteration (and thus only evaluates the objective function a single time), the population size and its composition affect the algorithm's runtime crucially.
If the population size is large, progress is only made quickly if the probability is high to select a parent that can be turned likely into a useful offspring.
This probability, in turn, relies on where the current population is.
For \cocz, assume that the entire population is already on the Pareto front, that is, for each individual~$x$ in the population, we have $g_1(x) = n/2$.
If the $g_2$-values of the population consist of a contiguous interval, that is, there is an $i \in [n/2]$ such that for each $j \in [i, n/2 - i]$, there is an~$x$ in the population such that $g_2(x) = j$, then new solutions are only created likely if individuals with a $g_2$-value close to the interval borders are chosen.\footnote{For the SEMO, even only the two extreme individuals with a $g_2$-value of~$i$ or $n/2 - i$ can create a novel objective values.}
The situation is different if we assume that the individuals have more spread-out $g_2$-values, that is, if there are some $g_2$-values in the interval $[i, n/2 - i]$ that are not covered by the current population.
Then, each individual that is close to the border of some $g_2$ interval can be useful for finding novel objective values.

A central question to proving our main result (\Cref{thm:gsemo-cocz-lower-bound}) is which composition the population of the (G)SEMO has once it reaches the Pareto front (and a short time thereafter).
In order to answer this question satisfactorily, we view the progress of the algorithm covering the Pareto front of \cocz in two dimensions, namely, with respect to the maximum $g_1$-value in the population and with respect to the extremal $g_2$-values in the population.
The~$g_1$ quantity translates to how close the population is to reaching the Pareto front, as each individual with a $g_1$-value of~$n/2$ is Pareto-optimal.
The~$g_2$ quantity translates to how close the population is to reaching the values in the second objective that are hardest to achieve, that is,~$n/2$ (having only zeros in the second half) and~$n$ (having only ones in the second half).

In our analysis, we optimistically assume, roughly, that once an individual reaches the Pareto front, all other individuals are also placed there immediately by setting their number of ones in the first half to the maximum value of~$n/2$.\footnote{Actually, we place all individuals on the Pareto front after a time that is a bit longer than it takes the algorithm to reach the Pareto front, but the main idea remains the same.}
Thus, tracking the extremal $g_2$-values tells us from this moment how close the algorithm is to covering the entire Pareto front.
In a nutshell, we show that once the algorithm reaches the Pareto front, the extremal $g_2$-values are still at least order~$\sqrt{n}$ away from the borders of the~$g_2$ interval (\Cref{lem:time-for-expanding-toward-the-PF-borders}).
From there on, based on a coupon collector argument, the algorithm still requires order $n \log n$ iterations \emph{with useful parents} in order to cover the entire Pareto front.
Since we also prove that the (G)SEMO has a population size of at least~$\frac{n}{4}$ once it reaches the Pareto front (\Cref{cor:time-to-spread}), the probability to choose a useful parent is in the order~$\frac{1}{n}$.
Thus, it still takes $\Omega(n^2 \log n)$ iterations until the algorithm covers the entire front (\Cref{lem:idealized-process-lower-bound}).
Reaching the Pareto front is done faster than that (\Cref{lem:time-to-spread}), and thus \Cref{thm:gsemo-cocz-lower-bound} follows.


\textbf{A modified (G)SEMO algorithm.}
A main challenge in our proof strategy is to track the exact population size of the algorithm while there are individuals not on the Pareto front.
This is due to such individuals being potentially dominated by better solutions and then removed.
Once the entire population is on the Pareto front, this problem vanishes, as solutions either have the same objective value or are incomparable.
In order to estimate the population size more easily until the Pareto front is reached, we make the following important observation:
We only aim to show that the extremal $g_2$-values are sufficiently far from~$n/2$ and~$n$ \emph{before} the (G)SEMO reaches the Pareto front.
This is a \emph{relative} statement, essentially comparing the progress made with the maximum $g_1$-value in the population to the progress made with the extremal $g_2$-values.
Thus, we can arbitrarily modify the (G)SEMO as long as we make sure that this relative order is not harmed.
We call the resulting algorithm the \emph{modified (G)SEMO}.



The modified (G)SEMO is identical to the (G)SEMO (\Cref{algo:GSEMO}) except for line~\ref{line:selectParent}, which is replaced by the following procedure, using the notation of the pseudocode.
Choose a value $i \in [0 .. \frac{n}{2}]$ uniformly at random.
Check if $\{z \in P^{(t)} \mid g_2(z) = i\}$ is empty.
If it is, continue with the next iteration.
Otherwise, note that the set contains exactly one individual~$x^{(t)}$, as all individuals with equal $g_2$-value are comparable and~$P^{(t)}$ thus contains at most one such individual.
Continue with line~\ref{line:mutation} exactly seen in \Cref{algo:GSEMO}, using~$x^{(t)}$.
Note that the resulting modified (G)SEMO resembles a version of the original (G)SEMO that may add some pointless iterations not modifying the algorithm's state.
Thus, in particular, each upper bound on the runtime of the modified (G)SEMO is also an upper bound on the runtime of the original (G)SEMO.

A key observation is that if we consider a run of the modified (G)SEMO and remove all iterations in which an index~$i$ with no corresponding individuals is chosen, the algorithm is identical of the original (G)SEMO.
Thus, any statements about the states of either algorithm based on stopping times defined only on algorithm states are identical.
This allows us to translate results from the modified (G)SEMO to the original one, and it addresses the challenge above of estimating the population size of the original (G)SEMO very closely.
Once the modified (G)SEMO reaches the Pareto front and has a linear population size, we switch back to the original (G)SEMO in order to derive a runtime bound for this exact algorithm.



As outlined above, we compare the time it takes the modified (G)SEMO to reach the Pareto front (measured via the maximum $g_1$-value in the population) and the time it takes to reach extremal $g_2$-values in the order of~$\sqrt{n}$.
More specifically, we show that the modified (G)SEMO reaches the Pareto front with high probability in $O(n^2)$ iterations (\Cref{lem:time-to-Pareto-front}), whereas it takes $\Omega(n^2 \log n)$ iterations until the $g_2$-values progressed sufficiently far (\Cref{lem:time-for-expanding-toward-the-PF-borders}).
In addition, we show that once the modified (G)SEMO reaches the Pareto front, it reaches a population size linear in~$n$ within the same order of time (\Cref{lem:time-to-spread}).
Combining these statements, we get with high probability that the modified (G)SEMO (and thus also the original (G)SEMO) has a linear population size before the extremal $g_2$-values are close to covering the entire interval.

We recall that all upper bounds on the iterations for the modified (G)SEMO algorithm also hold for the original (G)SEMO algorithm.

\paragraph{Progress of the modified (G)SEMO on the $g_1$-values.}
We begin by showing that the modified (G)SEMO quickly reaches the Pareto front of \cocz and expands its population to a linear size (where we recall that for the (G)SEMO, different Pareto optima in the population necessarily have different objective values). To this end, we determine the probability to cover a fitness vector of the current best cooperative level if a linear fraction of these vectors is still uncovered.\footnote{For reasons of space, most proofs had to be omitted in this extended abstract. The reviewers can find them in the appendix. After the reviewing process, we will post a complete version with all proofs on the arxiv preprint server.}

\begin{restatable}{lemma}{probabilityToSpread}
  \label{lem:probability-to-spread}
  Let $0<\delta<1$. Consider one iteration of the modified (G)SEMO maximizing $f\coloneqq\cocz$, and denote by $Z_t$ the number of individuals with a maximum $g_1$-value $\ell$ in $P^{(t)}$. Suppose that $1 \leq Z_t < \delta n/2$. Then the probability to increase $Z_t$ by one is at least $\frac{1}{n/2+1} \cdot \frac{1-\delta}{4e}$.
\end{restatable}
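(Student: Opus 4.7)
The plan is to exhibit a specific parent at $g_1$-level $\ell$ for which standard bit mutation can reach many ``useful'' offspring objective values---namely values with $g_1 = \ell$ and with $g_2$ outside the set $S \coloneqq \{g_2(x) \mid x \in P^{(t)}, g_1(x) = \ell\}$, which has exactly $Z_t$ elements because distinct level-$\ell$ individuals are incomparable and therefore have distinct $g_2$-values. Before counting useful flips, I would verify that any such offspring $y$ is in fact accepted and strictly grows $Z_t$: no level-$\ell$ member can dominate $y$ (none shares its $g_2$-value and distinct $g_2$-values at the same $g_1$-level are incomparable), while any lower-$g_1$-level individual sharing $g_2(y)$ is strictly dominated by $y$ and silently removed; hence, under the event ``offspring has $g_1 = \ell$ and $g_2 \in [0..n/2] \setminus S$'', we have $Z_{t+1} = Z_t + 1$.

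The key geometric claim is the existence of a level-$\ell$ parent $x$ with $g_2(x) = s$ and a direction such that flipping any of at least $(1-\delta)n/4$ second-half bits of $x$ yields a useful offspring. To prove it I consider the interval $I \coloneqq [(1-\delta)n/4 - 1, (1+\delta)n/4 + 1]$, which contains more than $\delta n/2$ integers in $[0..n/2]$ (for $n$ sufficiently large that the endpoints lie in $[0,n/2]$). Since $|S| = Z_t < \delta n/2$, pigeonhole gives some $t^* \in I \cap [0..n/2]$ with $t^* \notin S$. Take the maximal sub-interval $[a, b] \subseteq [0..n/2] \setminus S$ containing $t^*$ and split into two cases. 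If $a \geq 1$, then $a - 1 \in S$, and flipping any one of the $n/2 - (a-1) \geq n/2 - (1+\delta)n/4 = (1-\delta)n/4$ zero-bits in the second half of the level-$\ell$ individual with $g_2 = a - 1$ produces offspring with $g_2 = a \notin S$. If $a = 0$, then $[0..b] \cap S = \emptyset$, so $\min S \geq b + 1 \geq t^* + 1 \geq (1-\delta)n/4$, and flipping any of the $\min S \geq (1-\delta)n/4$ one-bits in the second half of the level-$\ell$ individual with $g_2 = \min S$ produces offspring with $g_2 = \min S - 1 \in [0..b] \setminus S$.

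With such a parent identified, the probability bound follows immediately. Because the modified (G)SEMO picks a $g_2$-value $i \in [0..n/2]$ uniformly and then the unique individual with that value (if any), the chance of selecting our parent is exactly $1/(n/2+1)$. Conditioned on this, standard bit mutation of the GSEMO flips exactly one specific bit and nothing else with probability $(1/n)(1-1/n)^{n-1} \geq 1/(en)$; summing over the $m \geq (1-\delta)n/4$ useful bits (disjoint single-bit events) yields mutation probability at least $(1-\delta)/(4e)$. For the SEMO the mutation hits a useful bit with probability $m/n \geq (1-\delta)/4 \geq (1-\delta)/(4e)$. Multiplying selection and mutation probabilities gives the claimed lower bound $\frac{1}{n/2+1} \cdot \frac{1-\delta}{4e}$.

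The main obstacle is the geometric lemma: the arrangement of $S$ inside $[0..n/2]$ is unconstrained, so one must rule out the pathological situation where every $s \in S$ has both neighbors in $S$ or lies so close to $0$ or $n/2$ that the associated single-bit flips offer too few bit choices. The interval $I$ centered at $n/4$ is exactly what forces an uncovered integer $t^*$ into the ``middle'' range, and from $t^*$ I can walk outward to the boundary of its $\bar S$-run to obtain a parent whose useful direction of mutation has $\Omega(n)$ available bits; the small $\pm 1$ buffer in the definition of $I$ is included to absorb integer-rounding issues while preserving the constant $(1-\delta)/(4e)$.
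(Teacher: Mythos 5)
Your proof is correct and follows essentially the same route as the paper's: a pigeonhole argument locates an uncovered $g_2$-slot among the more than $\delta n/2$ values near $n/4$, a case distinction (yours on whether the uncovered gap touches $0$, the paper's on whether some level-$\ell$ individual has $g_2 < (1-\delta)n/4$) yields a level-$\ell$ parent with at least $(1-\delta)n/4$ useful one-bit flips, and multiplying the selection probability $\frac{1}{n/2+1}$ by the per-bit mutation probability gives the bound. Your explicit verification that the useful offspring is accepted and strictly increases $Z_t$ is a welcome detail that the paper leaves implicit.
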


With \Cref{lem:probability-to-spread}, we bound the expected time to find a Pareto optimal individual in the modified (G)SEMO from above.

\begin{restatable}{lemma}{timeToParetoFront}
  \label{lem:time-to-Pareto-front}
  Consider the modified (G)SEMO maximizing $f\coloneqq\cocz$.  With probability $1-\exp(-\Omega(\sqrt{n}))$, after at most $O(n^2)$ iterations, for every initialization of $x^{(0)}$ the population of the modified (G)SEMO reaches the Pareto front, i.e. $P^{(t)}$ contains a Pareto optimal individual $x$.
\end{restatable}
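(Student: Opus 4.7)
The plan is to follow the maximum $g_1$-value $\ell^{(t)} := \max_{x \in P^{(t)}} g_1(x)$ in the population of the modified (G)SEMO and prove that it reaches $n/2$ within $O(n^2)$ iterations except with probability $\exp(-\Omega(\sqrt{n}))$. Since the statement quantifies over every initialization, I work under the worst case $\ell^{(0)} = 0$; any better starting point only helps the algorithm. Let $Z^{(t)}$ denote the number of individuals at the current max level.

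In the modified (G)SEMO, a particular max-level parent is selected with probability exactly $\frac{1}{n/2+1}$, and standard bit mutation flips one specific first-half zero (and no other bit) with probability at least $\frac{1}{en}$. Hence the probability to increase $\ell^{(t)}$ in one step is at least
\[
p_{\mathrm{up}}(\ell, Z) \;\geq\; \frac{Z}{n/2+1}\cdot\frac{n/2-\ell}{en}\,,
\]
while Lemma~\ref{lem:probability-to-spread} gives the complementary spread rate $\Omega(1/n)$ whenever $Z^{(t)} < n/4$. The crux of obtaining $O(n^2)$ rather than the classical $O(n^2\log n)$ is to exploit that, when the level increases from $\ell$ to $\ell+1$ via a newly created individual $y$ with $g_2(y)=g$, only the (at most three) level-$\ell$ individuals with $g_2\in\{g-1,g,g+1\}$ are dominated and removed, so all remaining level-$\ell$ survivors stay in the population. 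Each such survivor can be mutated by a single first-half zero-flip to produce a level-$(\ell+1)$ individual at its own $g_2$-value, which inherits the spread effort spent at level $\ell$. I would carry an invariant across levels stating that, at the first time $\ell^{(t)} = \ell$, the count $Z^{(t)}$ is at least $\Omega(\min\{n/4,\ell+1\})$; this invariant is maintained because Lemma~\ref{lem:probability-to-spread} grows the count at rate $\Omega(1/n)$ while the level is fixed, and the inheritance remark shows that level-$\ell$ survivors immediately supply level-$(\ell+1)$ individuals at the same selection-times-mutation rate. Plugging the invariant into $p_{\mathrm{up}}$ yields a level-up rate of $\Omega(\min\{n/4,\ell+1\}\cdot(n/2-\ell)/n^2)$, whose inverse summed over $\ell\in\{0,\ldots,n/2-1\}$ telescopes to $O(n^2)$.

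For the high-probability statement, I would stochastically dominate the waiting time at each level by an independent geometric random variable with success parameter at least $p_{\mathrm{up}}(\ell, Z_\ell)$ and apply Theorem~\ref{thm:geometric-sum-bound}. A short computation shows $p_{\min} = \Omega(1/n)$ and $s = \sum_i 1/p_i^2 = O(n^2)$ (the main contributions come from the extreme levels $\ell \in \{O(1),\, n/2 - O(1)\}$ and sum to $O(n^2)$ by a harmonic-type bound), so choosing deviation $\lambda = \Theta(n^{3/2})$ makes both $\lambda p_{\min}$ and $\lambda^2/s$ at least $\Omega(\sqrt{n})$, yielding the desired $1-\exp(-\Omega(\sqrt{n}))$ tail. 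The main obstacle I foresee is proving the lower-bound invariant on $Z^{(t)}$ rigorously: the count can temporarily dip by up to three individuals each time the level increases, and one must set up the drift argument so that the joint evolution of $\ell^{(t)}$ and $Z^{(t)}$ has positive drift of the correct magnitude uniformly in the current state; I expect this to be the technically most delicate part of the proof, likely executed via a two-dimensional potential function combining $\ell^{(t)}$ and a suitable monotone transform of $Z^{(t)}$.
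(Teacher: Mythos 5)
There is a genuine gap, and it sits exactly where you predicted: the invariant ``at the first time $\ell^{(t)}=\ell$, we have $Z^{(t)} \geq \Omega(\min\{n/4,\ell+1\})$'' is false as stated. At the first iteration in which the maximum $g_1$-level equals $\ell$, exactly one individual in the population has that level (the freshly created offspring), so $Z^{(t)}=1$; the count does not merely ``dip by up to three'', it resets. Your inheritance remark (a new level-$(\ell+1)$ individual with $g_2$-value $g$ dominates only the level-$\ell$ individuals with $g_2 \in \{g-1,g,g+1\}$, and each surviving level-$\ell$ individual can be lifted by a one-bit flip) is correct but does not rescue the invariant: the per-iteration rate at which a level-$\ell$ survivor is promoted to level $\ell+1$ is of the same order, $\Theta((n/2-\ell)/n^2)$, as the rate at which a level-$(\ell+1)$ individual advances the maximum level to $\ell+2$. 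So with $m$ survivors and $z$ promoted individuals, the level advances before the count rebuilds with probability roughly $z/(m+z)$ at each relevant event, and with probability $\Omega(1/m)$ the level advances while $z$ is still $1$; over $n/2$ levels this failure occurs many times, so the invariant cannot hold with the required probability. Alternatively, if you force the count to rebuild to $\Theta(\ell)$ via \Cref{lem:probability-to-spread} before allowing a level-up, the rebuilding alone costs $\Theta(\ell n)$ expected iterations per level and $\Theta(n^3)$ in total, destroying the $O(n^2)$ bound. (A symptom of the missing accounting: with your invariant the level-up waiting times alone would sum to $O(n\log n)$, not $O(n^2)$, which should have signalled that the spread-building time was left out.)

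The paper avoids this issue by not trying to maintain a large spread at all. It pessimistically assumes $Z_t$ resets to $1$ at every level-up and only requires $Z_t$ to be rebuilt to $\sqrt{n/X_t}$ (with $X_t = n/2-\max_{x\in P^{(t)}}g_1(x)$) before attempting the next level-up. This threshold balances the two costs: the total number of spread increments over the whole run is at most $\sum_{i=1}^{n/2}\sqrt{n/i} \le 2n$, each taking $O(n)$ expected iterations by \Cref{lem:probability-to-spread}, while the $n/2$ level-up waiting times have success probabilities at least $\sqrt{i}/(en^{1.5})$ and hence expectations summing to $O(n^2)$; \Cref{thm:geometric-sum-bound} is then applied separately to these two families of geometric variables to get the $\exp(-\Omega(\sqrt{n}))$ tail. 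If you want to salvage your route you would need a genuinely two-dimensional drift or potential argument tracking survivors across levels; the paper's threshold choice is the much simpler fix.
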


Once the modified (G)SEMO reaches the Pareto front, we show that it achieves a population size linear in the problem size in the same amount of time.

\begin{restatable}{lemma}{timeToSpread}
  \label{lem:time-to-spread}
  Consider the modified (G)SEMO maximizing \cocz and suppose there is an individual on the Pareto front. Then with probability $1-\exp(-\Omega(n))$, after at most $O(n^2)$ iterations the population of the modified (G)SEMO contains at least~$\frac{n}{4}$ individuals on the Pareto front.
\end{restatable}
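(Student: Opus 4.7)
The plan is to track the number $Z_t$ of Pareto-optimal individuals in $P^{(t)}$ once some Pareto optimum is present, and to show that $Z_t$ grows from at least $1$ to at least $n/4$ in $O(n^2)$ iterations with probability $1 - \exp(-\Omega(n))$, via a direct application of \Cref{lem:probability-to-spread}.

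\emph{Setup and monotonicity.} The first step is to observe that once a Pareto optimum lies in $P^{(t)}$, the maximum $g_1$-value $\ell$ in the population equals $n/2$ and remains at $n/2$ forever: no offspring can strictly dominate a Pareto optimum since any Pareto optimum already attains the maximum possible $g_1$-value $n/2$; moreover, an offspring with $g_1 = n/2$ either has a $g_2$-value already present in the population (in which case it only replaces one Pareto optimum by a copy of equal objective value) or a novel $g_2$-value (in which case it is incomparable to every existing Pareto optimum and is added without removing any). Hence, from that moment on, the quantity $Z_t$ of \Cref{lem:probability-to-spread} is precisely the number of Pareto optima in $P^{(t)}$, and $Z_t$ is non-decreasing.

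\emph{Per-step progress and expected runtime.} Next, I would apply \Cref{lem:probability-to-spread} with $\delta = 1/2$. As long as $1 \leq Z_t < n/4$, the probability to increase $Z_t$ in one iteration is at least
\[
  p \coloneqq \frac{1}{n/2 + 1} \cdot \frac{1}{8e} = \Omega(1/n).
\]
Since $Z_t$ cannot decrease, the first iteration $T$ at which $Z_t \geq n/4$ is stochastically dominated by $T^{\star} \coloneqq \sum_{k=1}^{\lceil n/4 \rceil - 1} D_k$, where the $D_k$ are independent geometric random variables with success probability $p$. In particular $\E[T^{\star}] \leq (n/4)/p = O(n^2)$.

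\emph{Concentration.} Finally, I would invoke \Cref{thm:geometric-sum-bound} with $p_i = p$, yielding $s \leq (n/4)/p^2 = O(n^3)$ and $\pmin = p = \Omega(1/n)$. Choosing $\lambda = C n^2$ for a sufficiently large constant $C$, both $\lambda^2 / s$ and $\lambda \pmin$ become $\Omega(n)$, so $\Pr[T^{\star} \geq \E[T^{\star}] + \lambda] \leq \exp(-\Omega(n))$. As $\E[T^{\star}] + \lambda = O(n^2)$ and $T \leq T^{\star}$, we obtain $T = O(n^2)$ with probability $1 - \exp(-\Omega(n))$.

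The main obstacle is the monotonicity-and-stability argument of the first paragraph (namely that $\ell = n/2$ persists and $Z_t$ is non-decreasing once the Pareto front has been touched), which requires a short case analysis on the possible mutation outcomes of line~\ref{line:mutation}; once these structural properties are established, the remainder is a routine decomposition into geometric waiting times followed by the tail bound from \Cref{sec:preliminaries:math-tools}.
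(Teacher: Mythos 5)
Your proposal is correct and follows essentially the same route as the paper's proof: applying \Cref{lem:probability-to-spread} with $\delta = 1/2$ to get a per-iteration success probability of $\frac{1}{n/2+1}\cdot\frac{1}{8e}$, dominating the waiting time by a sum of roughly $n/4$ i.i.d.\ geometric variables, and concluding with \Cref{thm:geometric-sum-bound}. The only difference is that you spell out the monotonicity of the count of Pareto optima, which the paper takes for granted; this is a harmless (and arguably welcome) addition.
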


Combining~\Cref{lem:time-to-Pareto-front,lem:time-to-spread}, we obtain that the modified (G)SEMO has a linear population size and is on the Pareto front in at most $O(n^2)$ iterations, with high probability.

\begin{corollary}
  \label{cor:time-to-spread}
  Consider the modified (G)SEMO maximizing \cocz. Then with probability $1-\exp(-\Omega(\sqrt{n}))$, after at most $O(n^2)$ iterations the population of the modified (G)SEMO contains at least~$\frac{n}{4}$ individuals on the Pareto front.
\end{corollary}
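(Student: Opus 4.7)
The plan is to derive this corollary as a direct composition of \Cref{lem:time-to-Pareto-front} and \Cref{lem:time-to-spread} via a union bound. The two lemmas split the process into two consecutive phases: first, reaching the Pareto front starting from an arbitrary initialization; second, expanding to a population of linear size once a Pareto-optimal individual is present. Since the conclusion of the corollary requires both events to occur within a total of $O(n^2)$ iterations, I will bound the failure probability of each phase separately and then combine them.

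First, I would invoke \Cref{lem:time-to-Pareto-front}: with probability $1 - \exp(-\Omega(\sqrt{n}))$, there exists a time $T_1 = O(n^2)$ such that $P^{(T_1)}$ contains a Pareto-optimal individual, regardless of the initialization $x^{(0)}$. Call this good event $\calA_1$. Conditioned on $\calA_1$, I restart the analysis from iteration $T_1$, treating the state $P^{(T_1)}$ (which contains at least one Pareto optimum) as the initial configuration of a fresh run of the modified (G)SEMO in the sense of \Cref{lem:time-to-spread}. This is valid because the modified (G)SEMO is a Markov chain and \Cref{lem:time-to-spread} only assumes the presence of some individual on the Pareto front. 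Applying \Cref{lem:time-to-spread}, with probability $1 - \exp(-\Omega(n))$ there exists a time $T_2 \le T_1 + O(n^2)$ such that $P^{(T_2)}$ contains at least $n/4$ individuals on the Pareto front. Call this event $\calA_2$.

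Now I apply a union bound: the probability that either $\calA_1$ or $\calA_2$ fails is at most $\exp(-\Omega(\sqrt{n})) + \exp(-\Omega(n)) = \exp(-\Omega(\sqrt{n}))$, since the second term is dominated by the first. On the complementary event $\calA_1 \cap \calA_2$, the total iteration count $T_2 = T_1 + (T_2 - T_1) = O(n^2) + O(n^2) = O(n^2)$, and $P^{(T_2)}$ contains at least $n/4$ Pareto-optimal individuals. This is precisely the statement of the corollary.

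There is essentially no main obstacle here, since the corollary is designed as a clean composition of the two preceding lemmas. The only mild subtlety is ensuring that the application of \Cref{lem:time-to-spread} starting from iteration $T_1$ is legitimate; this follows from the Markov property of the modified (G)SEMO together with the fact that \Cref{lem:time-to-spread} is stated in a form that holds uniformly over all population states containing at least one Pareto optimum, so no additional conditioning on the exact shape of $P^{(T_1)}$ is needed.
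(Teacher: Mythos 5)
Your proposal is correct and follows exactly the paper's intended argument: the corollary is obtained by chaining \Cref{lem:time-to-Pareto-front} and \Cref{lem:time-to-spread} (the latter applied from the first hitting time of the Pareto front via the Markov property) and taking a union bound over the two failure probabilities, with $\exp(-\Omega(\sqrt{n})) + \exp(-\Omega(n)) = \exp(-\Omega(\sqrt{n}))$ and $O(n^2) + O(n^2) = O(n^2)$.
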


\paragraph{Progress of the modified (G)SEMO on the $g_2$-values.}
We show that the modified (G)SEMO takes some time in order to find solutions that are close to the extremal solutions of the Pareto front.
We call this value the \emph{distance to the Pareto borders}.
Formally, for all $z \in \{0, 1\}^n$, let the \emph{distance of~$z$ to the Pareto borders} be $d_{\mathrm{PF}}(z) \coloneqq \min \bigl\{g_2(z), \frac{n}{2} - g_2(z)\bigr\}$.
Using the notation of \Cref{algo:GSEMO}, for all $t \in \N$, we define the \emph{distance of the algorithm to the Pareto borders in iteration~$t$} as $d_{\mathrm{PF}}(P^{(t)}) \coloneqq \min_{z \in P^{(t)}} d_{\mathrm{PF}}(z)$.

\begin{restatable}{lemma}{timeForExpandingTowardThePFBorders}
  \label{lem:time-for-expanding-toward-the-PF-borders}
  Consider the modified (G)SEMO maximizing \cocz.
  Let $c \in \R_{> 0}$ be a sufficiently small constant.
  With probability at least $1 - \Theta(n^{-2/5})$, for all iterations $t \in [0 .. c n^2 \ln n]$, the distance of the algorithm to the Pareto borders in iteration~$t$ is at least $\sqrt{n}$.
\end{restatable}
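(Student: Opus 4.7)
My plan is to lower bound the first hitting time $T := \inf\{t : d_{\mathrm{PF}}(P^{(t)}) < \sqrt{n}\}$. Writing $Y_t := \min_{z \in P^{(t)}} g_2(z)$ and $N_t := \max_{z \in P^{(t)}} g_2(z)$, one has $d_{\mathrm{PF}}(P^{(t)}) = \min(Y_t, \tfrac{n}{2} - N_t)$, and the substitution $g_2 \leftrightarrow \tfrac{n}{2} - g_2$ is a symmetry of \cocz that swaps $Y_t$ with $\tfrac{n}{2} - N_t$. It therefore suffices to lower bound $\inf\{t : Y_t < \sqrt n\}$ with probability $1 - \Theta(n^{-2/5})$ and to union-bound with the symmetric statement for $N_t$. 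For the initial state, $Y_0 = g_2(x^{(0)}) \sim \Bin(\tfrac{n}{2}, \tfrac{1}{2})$ because $x^{(0)}$ is uniform on $\{0,1\}^n$, and applying \Cref{thm:chernoff-lower-bound} with $\delta \asymp \sqrt{(\ln n)/n}$ gives $Y_0 \geq \tfrac{n}{4} - O(\sqrt{n \ln n})$ with probability $1 - O(n^{-2/5})$, leaving essentially $\tfrac{n}{4}$ units of cushion before $Y$ could reach $\sqrt n$.

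For the per-iteration drift, I would observe that $Y_t$ can only decrease if some offspring $y$ with $g_2(y) < Y_t$ is produced, so it is enough to upper bound this production probability (regardless of whether $y$ is accepted). Since the modified (G)SEMO keeps at most one individual per $g_2$-value (by the comparability of equal-$g_2$ individuals in \cocz) and selects a $g_2$-index uniformly from $[0 .. \tfrac{n}{2}]$, the dominant contribution when $Y_t = m$ comes from picking the unique $g_2 = m$ individual (probability $\tfrac{1}{n/2+1}$) and, via standard-bit mutation, flipping strictly more $1$-bits than $0$-bits in the second half, which has probability at most $\Pr[\Bin(m,1/n) \geq 1] \leq m/n$. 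Parents with $g_2 > m$ require multi-bit jumps to overshoot past $m$ and contribute only lower-order terms. Altogether, $\Pr[Y_{t+1} < m \mid Y_t = m] \leq Cm/n^2$ for a constant $C > 0$. Moreover, this per-step probability depends only on the bit pattern of the second half of the parent at $g_2 = m$, and hence on $m$ alone, not on the $g_1$-value of the specific individual, which makes the time spent at any level $m$ a genuine geometric random variable.

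To turn this into a lower bound on $T$, I would decompose the descent of $Y$ into level-exits: for each integer $m \in [\sqrt n + 1, Y_0]$ that the process visits, the number of iterations with $Y_t = m$ before $Y$ leaves level $m$ downward stochastically dominates (via a standard Bernoulli coupling) an independent $\Geom(Cm/n^2)$ variable. The sum $T^{\star}$ of these geometrics satisfies $\mathbb{E}[T^{\star}] = \sum_{m=\sqrt n+1}^{Y_0} \tfrac{n^2}{Cm} = \Theta(n^2 \ln n)$ (using \Cref{thm:sums-to-integrals}); applying \Cref{thm:geometric-sum-bound} with $s = \sum_m (n^2/Cm)^2 = \Theta(n^{7/2})$ and $\lambda = \Theta(\mathbb{E}[T^{\star}])$ gives a lower-tail probability of $\exp(-\Omega(\sqrt n \ln^2 n))$, which is far below $n^{-2/5}$. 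Combined with the initial Chernoff bound and the symmetric argument for $N_t$, this yields the claim.

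The main obstacle is that in the GSEMO, multi-bit mutations can cause $Y$ to skip some integer levels, so $T$ only dominates the sum of $V_m$ over \emph{visited} levels. I plan to handle this via a simple Markov bound on the total number of missed levels $K := \sum_t \max(\Delta_t - 1, 0)$. From the tail of $\Bin(m, 1/n)$, $\mathbb{E}[\max(\Delta_t - 1, 0) \mid Y_{t-1} = m] = O(m^2/n^3)$, so $\mathbb{E}[K] = O(\ln n)$ over $cn^2 \ln n$ iterations; Markov then yields $K \leq \sqrt n$ with probability $1 - o(n^{-2/5})$. On this event the skipped levels cost at most $\sqrt n \cdot \max_m \tfrac{n^2}{Cm} = O(n^2)$ in $T^{\star}$, which is negligible next to $\Theta(n^2 \ln n)$. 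For the SEMO the decomposition is immediate, since single-bit mutations cannot skip levels.
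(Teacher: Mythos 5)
Your overall skeleton—decomposing the descent of the minimal $g_2$-value into level-exit times, dominating each by a geometric random variable with success probability $O(m/n^2)$, applying \Cref{thm:geometric-sum-bound} to their sum, and handling the two Pareto borders by symmetry and a union bound after a Chernoff bound on the initial individual—is exactly the paper's strategy, and for the SEMO (where one-bit mutation cannot skip levels and the improving parent is unique) your argument is essentially complete. The gap is in your treatment of multi-bit mutations for the GSEMO. You compute $\E[\max(\Delta_t-1,0)\mid Y_{t-1}=m]=O(m^2/n^3)$ and conclude $\E[K]=O(\ln n)$ over $cn^2\ln n$ iterations, but this only follows if $m=O(\sqrt n)$ throughout; the process starts at $Y_0\approx n/4$ and may spend essentially all of the $cn^2\ln n$ iterations at levels $m=\Theta(n)$, where the per-iteration skip expectation is $\Theta(1/n)$, so the correct bound is $\E[K]=O(n\ln n)$. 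Markov at threshold $\sqrt n$ then gives a failure "probability" exceeding $1$, and no useful threshold exists: to keep the loss in $\mu$ negligible you need $K=o(\sqrt n\,\ln n)$ (the worst case removes precisely the levels in $[\sqrt n,\sqrt n+K]$, costing about $\tfrac{n^2}{C}\ln(1+K/\sqrt n)$, which for $K=n^{1/2+\eps}$ is already a constant fraction of the $\Theta(n^2\ln n)$ you must retain). The paper avoids this by extracting the $\ln n$ factor only from the window $m\in[\sqrt n,2n^{3/5}]$, where it conditions on never making progress by flipping $11$ or more ones within $n^3$ iterations; that event has failure probability $\Theta(n^{-2/5})$ and is in fact the source of the $\Theta(n^{-2/5})$ in the statement. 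Your argument can be repaired along the same lines by restricting the level range to $[\sqrt n,n^{\alpha}]$ for a suitable $\alpha>1/2$ before applying Markov, but as written the bound on $K$ is off by a factor of $n$ and the proof does not go through.

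A secondary issue is independence. The time spent at level $m$ is not "a genuine geometric random variable": the exit probability depends on which other $g_2$-levels are occupied (parents at levels above $m$ can jump below it), and it is only upper-bounded by $Cm/n^2$. \Cref{thm:geometric-sum-bound} requires genuinely independent geometric variables over a deterministic index set, while your set of visited levels is random and correlated with the jump sizes. The paper resolves this by passing to an explicitly coupled variant in which the population is filled up around the current minimum and improvements are capped at $10$ levels, making the waiting times independent and the tail bound applicable uniformly over all index sets of the forced cardinality; your appeal to a "standard Bernoulli coupling" would need to be developed to this level of detail.
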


\paragraph{How the original (G)SEMO computes the full Pareto front.}
We show that if the original (G)SEMO is started in a state that the modified (G)SEMO reaches with high probability in $O(n^2)$ iterations, the original (G)SEMO still requires at least order $n^2 \ln n$ iterations in order to cover the Pareto front.
This statement relies on the linear lower bound on the population size from \Cref{cor:time-to-spread} as well as on the distance of at least~$\sqrt{n}$ to the Pareto borders from \Cref{lem:time-for-expanding-toward-the-PF-borders}.

\begin{restatable}{lemma}{idealizedProcessLowerBound}
  \label{lem:idealized-process-lower-bound}
  Consider the (G)SEMO maximizing \cocz, starting with a population size of $\Theta(n)$ on the Pareto front and a distance to the Pareto borders of at least $\sqrt{n}$.
  Then with probability $1 - \Theta(n^{-1})$, the algorithm covers the Pareto front after $\Omega(n^2 \log n)$ objective-function evaluations.
\end{restatable}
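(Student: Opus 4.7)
The plan is to focus on the minimum $g_2$-value $m_t := \min_{z \in P^{(t)}} g_2(z)$ in the population and to lower-bound $T_0 := \min\{t : m_t = 0\}$; by symmetry, an analogous argument applies to $\max_{z \in P^{(t)}} g_2(z)$, so the runtime to cover the entire Pareto front is at least $T_0$. A key structural observation is that, once every individual is Pareto-optimal (hence shares $g_1 = n/2$), individuals of equal $g_2$-value coincide, so $|P^{(t)}|$ is non-decreasing and stays between the starting size $\Theta(n)$ and $n/2+1$ throughout; in particular, the probability of selecting any specific individual as parent is at most $4/n$.

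The central per-iteration estimate I would establish is that whenever $m_t = j$, one has $\Pr[m_{t+1} < m_t] \le C j/n^2$ for an absolute constant $C > 0$. The dominant contribution comes from picking the unique individual with $g_2 = j$ as parent (probability at most $4/n$) and flipping exactly one of its $j$ one-bits in the second half to zero without disturbing the first half (probability $O(j/n)$). Non-extremal parents with $g_2 = j + \ell$ for $\ell \ge 1$ would require at least $\ell + 1$ specific bit-flips in the second half, so their total contribution is only $o(j/n^2)$.

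With this bound in hand, I would decompose $T_0$ by the values of $m_t$: for each $j \in \{1, \ldots, \lceil \sqrt n \rceil\}$ actually visited, the number of iterations $V_j$ spent with $m_t = j$ stochastically dominates a $\Geom(C j/n^2)$ random variable. Applying \Cref{thm:geometric-sum-bound} to the idealized sum $T^{\star} := \sum_{j=1}^{\lceil \sqrt n \rceil} G_j$ of independent $\Geom(C j/n^2)$ random variables yields $\E[T^{\star}] = \Theta(n^2 \log n)$ via \Cref{thm:sums-to-integrals}, while the variance parameter $s = \sum_j (n^2/(C j))^2 = O(n^4)$ together with $\lambda = \E[T^{\star}]/2$ gives $\Pr[T^{\star} \le \E[T^{\star}]/2] \le \exp(-\Theta(\log^2 n)) = o(n^{-1})$.

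The main difficulty I expect is transferring this concentration from the idealized geometric sum to the actual process, because $m_t$ can occasionally drop by more than one and thereby skip a phase. Fortunately, a refinement of the per-iteration estimate shows that, conditional on a drop from level $j$, the probability of dropping by at least $2$ is only $O(j/n)$, so the expected number of phases in $\{1, \ldots, \sqrt n\}$ that are skipped is $O(1)$. A second-moment argument, combined with the near-independence of skip events at different levels, should show that with probability $1 - O(n^{-1})$ the set of visited levels is rich enough that the surviving geometric lower bounds still sum to $\Omega(n^2 \log n)$ and admit the same concentration via \Cref{thm:geometric-sum-bound}, completing the proof.
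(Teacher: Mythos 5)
Your overall strategy is the same as the paper's: track the extremal $g_2$-value (the paper works with $d_{\mathrm{PF}}(P^{(t)})$ and handles both borders at once, you split into min and max by symmetry), bound the per-iteration probability of decreasing it from level $j$ by $O(j/n^2)$ using the $\Omega(n)$ population size, decompose the hitting time of $0$ into per-level waiting times dominating geometrics, and conclude via \Cref{thm:geometric-sum-bound}. Your per-iteration estimate and the computation $\E[T^\star]=\Theta(n^2\log n)$, $s=O(n^4)$, $\exp(-\Theta(\log^2 n))=o(n^{-1})$ are all fine (the paper gets $s=\Theta(n^{7/2})$ only because it sums from $j\ge\sqrt n$; your weaker bound still suffices). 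A small inaccuracy: once the population contains Pareto optima, non-Pareto-optimal but incomparable offspring can still enter and later be removed, so $|P^{(t)}|$ need not be non-decreasing; but since Pareto-optimal individuals are never lost, the population size stays $\Omega(n)$ and is at most $n/2+1$ (one individual per $g_2$-value), which is all you actually use.

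The genuine gap is in your last paragraph, and it is exactly the step you flag as the main difficulty. To apply \Cref{thm:geometric-sum-bound} you need a sum of \emph{independent} geometric variables, but in the actual process the set of visited levels is random and correlated with the time spent at each level, and the per-level drop probability depends on the population composition near the border (how many individuals sit at $g_2$-values $j+1,j+2,\dots$ determines how likely a multi-bit jump past level $j$ is). "Expected number of skipped levels is $O(1)$" plus "a second-moment argument and near-independence should show\dots" does not resolve this: Markov gives only a constant-probability guarantee, the skip events at different levels are not independent without further construction, and you cannot union-bound your $\exp(-\Theta(\log^2 n))$ tail over the exponentially many feasible visited-level sets. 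The paper closes this gap with two concrete devices: (i) it shows that once the distance is below $\lceil\sqrt n\rceil$, no single iteration among the first $n^3$ decreases it by $8$ or more except with probability $2n^{-1}$ (this event is what produces the final $\Theta(n^{-1})$ failure probability, not the geometric concentration); and (ii) it couples the run with a variant process whose population is artificially "filled up" with individuals at distances $X,X+1,\dots,X+6$ to the border after every decrease, so that the state reached after each drop is the same regardless of how the drop occurred. This makes the waiting times genuinely independent geometrics whose success probabilities are bounded from \emph{both} sides by $X_{T_{i-1}}/\Theta(n^2)$, and lets $\mu$ and $s$ be bounded uniformly over all feasible level sequences, so the theorem applies conditionally on any realization. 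You would need to supply some equivalent of step (ii) — your sketch as written does not yet yield a valid application of \Cref{thm:geometric-sum-bound} to the real process.
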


Last, we prove our main result (\Cref{thm:gsemo-cocz-lower-bound}) by showing that the starting state assumed in \Cref{lem:idealized-process-lower-bound} is reached with high probability, as sketched before the lemma.

\begin{proof}[Proof of \Cref{thm:gsemo-cocz-lower-bound}]
  We only start counting function evaluations once the (G)SEMO has at least $\Theta(n)$ individuals on the Pareto front.
  Let~$T$ denote the first iteration in which this is the case.
  We proceed to argue why it has with probability $1 - \Theta(n^{-1})$ a distance of at least~$\sqrt{n}$ to the Pareto borders in iteration~$T$.
  By applying \Cref{lem:idealized-process-lower-bound}, the statement follows then and the proof is concluded.

  In order to show that the (G)SEMO is in the desired state in iteration~$T$, we consider the modified (G)SEMO instead.
  Recall that the original (G)SEMO changes states if and only if the modified (G)SEMO does so, albeit in potentially different iterations, and they transition into identical states.
  Let~$S$ denote the first iteration in which the modified (G)SEMO has at least $\Theta(n)$ individuals on the Pareto front.
  By \Cref{lem:time-to-Pareto-front}, with probability $1 - \exp\bigl(-\Omega(\sqrt{n})\bigr)$, we have that $S = O(n^2)$.
  Moreover, by \Cref{lem:time-for-expanding-toward-the-PF-borders}, we have with probability $1 - \Theta(n^{-2/5})$ that the distance of the modified (G)SEMO to the Pareto borders is at least~$\sqrt{n}$.
  Hence, with probability $1 - \Theta(n^{-2/5})$, the modified (G)SEMO is in the desired state in iteration~$S$.

  Since~$S$ and~$T$ refer to the same state of the respective algorithm, it follows that the original (G)SEMO is also in the desired state in iteration~$T$ with probability $1 - \Theta(n^{-1})$, concluding the proof.
\end{proof}

\section{Runtime Analysis on OMM and OJZJ}
\label{sec:additional-run-time-results}

We show that our insights from \Cref{sec:theory-results} about the population dynamics on \cocz also translate to the popular bi-objective benchmarks \omm~\cite{GielL10} and \ojzj~\cite{DoerrQ22ppsn}.

The \omm benchmark aims at maximizing and minimizing the number of ones in a bit string, resulting in \emph{all} individuals being Pareto-optimal.
This function resembles \cocz without the cooperative part. Formally, for all $x \in \{0, 1\}^n$,
\begin{align}
  \label{eq:omm}
  \omm(x)= \left(\textstyle\sum_{i\in [n]}x_i, \sum_{i\in [n]}(1-x_i)\right).
\end{align}
The Pareto front of \omm is $\{(i, n - i) \mid i \in [0 .. n]\}$.
In particular, each individual is Pareto-optimal, different from \cocz.

The \ojzj benchmark requires a \emph{gap size} $k \in [2 .. n]$ and is structurally identical to \omm for all individuals whose number of ones is at least~$k$ at most $n - k$.
Those individuals are all Pareto-optimal.
In addition, the all-ones and the all-zeros bit string are Pareto-optimal as well.
All other individuals are strictly worse.
This usually requires algorithms to flip at least~$k$ bits in order to find the extremal Pareto optima. This is formally defined as for all $x \in \{0, 1\}^n$, letting~$|x|_1$ and~$|x|_0$ denote respectively the number of ones and the number of zeros in~$x$, let $\ojzjk(x)=\bigl(f_1(x), f_2(x)\bigr)$ with
\begin{align}
  \label{eq:ojzj}
  f_1(x)=\begin{cases}
           k+|x|_1, & \text{ if $|x|_1 \leq n-k$ or $x=1^n$,} \\
           n-|x|_1, & \text{ else, and}
         \end{cases} \\
  \notag
  f_2(x)=\begin{cases}
           k+|x|_0, & \text{ if $|x|_0 \leq n-k$ or $x=0^n$,} \\
           n-|x|_0, & \text{ else.}
         \end{cases}
\end{align}
The first objective is the single-objective $\textsc{Jump}_k$ benchmark, which features a local optimum at $n - k$.
The second objective is structurally identical to the first but with the roles of ones and zeros reversed. \cite{DoerrZ21aaai} showed that the Pareto front~$F^*$ of $\ojzjk$ is $\{(a,2k+n-a) \mid a \in [2k \ldots n] \cup \{k,n-k\}\}$.
Note that each individual $x$ with $f(x) \in F^*$ strictly dominates each individual~$y$ with $f(y) \notin F^*$ since, for all $j \in [2]$, we have $f_j(x) \geq k$ but $f_j(y) \leq n-(n-k+1)=k-1$. For \omm, tight bounds are already known (see \Cref{sec:additional-run-time-results:omm}).
Hence, our results just provide a different angle of proving them.
For \ojzj, tight bounds were known for all $k \in \N_{\geq 4}$ (see \Cref{sec:additional-run-time-results:ojzj}), as a pessimistic bound of~$n$ for the population size is sufficient.
Our result shows that this bound also holds for the cases $k \in \{2, 3\}$, where our insights into the population dynamics are important.



For both benchmarks, we follow a similar strategy as in \Cref{sec:theory-results}.
Especially, we rely again on the modified (G)SEMO.
This modification needs to be slightly adjusted as follows, using the notation from its original definition:
We choose a value $i \in [0 .. n]$ uniformly at random (instead from $[0 .. \frac{n}{2}]$) and check whether the set $\{z \in P^{(t)} \mid g_1(z) + g_2(z) = i\}$ is empty.
That is, instead of focusing only on the number of ones in the first half, we now consider the number of ones in the entire bit string.
The rest remains identical.

\paragraph{Progress of the modified (G)SEMO.}
\Cref{lem:time-to-Pareto-front-OMM-OJZJ} below essentially translates \Cref{cor:time-to-spread} to \omm and \ojzj and shows that the modified (G)SEMO reaches a population size of~$\frac{n}{2}$ in $O(n^2)$ iterations.

\begin{restatable}{lemma}{timeToParetoFrontOMMOJZJ}
  \label{lem:time-to-Pareto-front-OMM-OJZJ}
  Consider the modified (G)SEMO maximizing \omm or $\ojzjk$ for $1<k\leq n/4$. With probability $1-\exp(-\Omega(\sqrt{n}))$, after at most $O(n^2)$ iterations, for every initialization of $x^{(0)}$ in case of \omm or for an initialization on the Pareto front distinct from $0^n$ and $1^n$ in case of $\ojzjk$, the population of the modified (G)SEMO contains at least~$n/2$ individuals.
\end{restatable}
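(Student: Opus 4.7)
The plan is to mirror the proof of \Cref{cor:time-to-spread} for \cocz: I would first establish an analog of \Cref{lem:probability-to-spread} that guarantees an $\Omega(1/n)$ per-iteration progress bound as long as $|P^{(t)}| < n/2$, and then conclude via the concentration result \Cref{thm:geometric-sum-bound}. A convenient preliminary observation is that for both benchmarks one has $|P^{(t)}| = |I_t|$, where $I_t \coloneqq \{|x|_1 \mid x \in P^{(t)}\}$, because individuals with equal ones count share the same fitness vector and the algorithm keeps at most one representative per vector. For \ojzjk, I would additionally verify the invariant $I_t \subseteq [k..n-k] \cup \{0, n\}$: any offspring $y$ with $|y|_1 \in [1..k-1] \cup [n-k+1..n-1]$ has both objective values bounded above by $k-1$, so it is strictly dominated by every Pareto-optimal member of $P^{(t)}$ (each of whose objective values is $\geq k$) and is therefore rejected. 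Under the initialization hypotheses, $I_0 = \{|x^{(0)}|_1\}$ with $|x^{(0)}|_1 \in [0..n]$ for \omm and $|x^{(0)}|_1 \in [k..n-k]$ for \ojzjk.

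For the progress step, assume $|I_t| < n/2$. Since $k \leq n/4$, the accessible range for $|x|_1$ has size at least $n/2 + 1$ in both cases. The modified (G)SEMO samples $i$ uniformly from $[0..n]$, contributing a factor $1/(n+1)$, and if $i \in I_t$ mutates the unique parent with ones count $i$. I would consider the leftmost and rightmost elements $a \leq b$ of $I_t$ inside the accessible range (the general case where $I_t$ is a union of several intervals only strengthens the bound). Flipping exactly one 1-bit in the parent with $|x|_1 = a$ produces $|y|_1 = a-1$ with probability at least $a/n$ (SEMO) or $a/(en)$ (GSEMO); symmetrically, flipping exactly one 0-bit in the parent with $|x|_1 = b$ produces $|y|_1 = b+1$ with probability at least $(n-b)/n$ or $(n-b)/(en)$, and whenever these new values lie in the accessible range but outside $I_t$, they enlarge $I_t$. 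Summing the two contributions and using $a + (n-b) = n - |I_t| + 1$ for a single-interval $I_t$, the total progress probability is of order $(n - |I_t| + 1)/(n(n+1)) = \Omega(1/n)$. Boundary cases in which $I_t$ touches one edge of the accessible range lose one of the two border contributions, but the surviving one has an $\Omega(1)$ per-bit correct-flip probability (many 0s or many 1s remain to flip), so the $\Omega(1/n)$ bound is preserved.

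With the per-iteration progress bound in hand, the time to grow $|I_t|$ from $1$ to $\lceil n/2 \rceil$ is stochastically dominated by $T^\star = \sum_{j=1}^{\lceil n/2 \rceil - 1} D_j$, where the $D_j$ are independent geometric random variables with success probabilities $p_j = \Omega(1/n)$. Hence $\E[T^\star] = O(n^2)$, $s \coloneqq \sum_j 1/p_j^2 = O(n^3)$, and $\pmin = \Omega(1/n)$. Choosing $\lambda = C n^2$ for a sufficiently large constant $C$ in \Cref{thm:geometric-sum-bound} gives both $\lambda^2/s = \Omega(n)$ and $\lambda \pmin = \Omega(n)$, so $\Pr[T^\star \geq (C+1) n^2] \leq \exp(-\Omega(n)) \leq \exp(-\Omega(\sqrt{n}))$, which completes the argument.

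The main technical obstacle is verifying that the $\Omega(1/n)$ progress probability is uniform across all possible shapes of $I_t$: when $|I_t|$ is small, few border elements contribute but each succeeds with an $\Omega(1)$ flip probability, while when $|I_t|$ is close to $n/2$, many border elements participate but each has a smaller per-bit correct-flip probability. The identity $a + (n-b) = n - |I_t| + 1$ for a single-interval $I_t$ exhibits the clean trade-off; multi-interval and boundary-touching configurations can only improve the bound, and the \ojzjk invariant above guarantees that stray non-Pareto-optimal offspring never intrude on the analysis.
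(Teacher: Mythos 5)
Your overall framework---reduce to the set $I_t$ of occupied ones-counts, note $|P^{(t)}|=|I_t|$, establish a uniform $\Omega(1/n)$ per-iteration probability of enlarging $I_t$ while $|I_t|<n/2$, and conclude with \Cref{thm:geometric-sum-bound}---is exactly the paper's strategy, and your \ojzjk domination invariant and the final concentration step are sound. The gap is in the progress bound for general shapes of $I_t$. Your $\Omega(1/n)$ estimate rests on the identity $a+(n-b)=n-|I_t|+1$, which holds only when $I_t$ is a single interval; in general $a+(n-b)=n+1-\mathrm{span}(I_t)$, and for the GSEMO (whose standard bit mutation changes the ones-count by more than one with constant probability) the span of $I_t$ can exceed $|I_t|$. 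Your assertion that multi-interval configurations ``only strengthen the bound'' is false for the quantity you actually compute: for \omm with $I_t=[1 .. 10]\cup\{n\}$ one has $a+(n-b)=1$, so the two extreme-element contributions give only $\Theta(n^{-2})$ even though $|I_t|\ll n/2$ (the true progress probability is still $\Omega(1/n)$ there, but via an internal gap, which your two-term bound ignores). Similarly, for \ojzjk a multi-interval $I_t$ can touch both ends $k$ and $n-k$ of the accessible range, killing both extreme contributions simultaneously.

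To close this you would either have to show that such spread-out configurations do not arise within the relevant time horizon (an argument you do not supply), or argue as the paper does: rather than using only the two extreme elements, locate \emph{some} occupied value adjacent to a missing accessible value whose parent has at least $n/4$ correct bits to flip. The paper's two cases accomplish this for every shape of $I_t$: if some individual has at most $\lfloor n/4\rfloor$ ones, then since $|I_t|<n/2$ a value in $[n/4 .. 3n/4]$ is missing, so there is an occupied $v<3n/4$ with $v+1$ missing and at least $n/4$ zeros to flip; otherwise all individuals have more than $n/4$ ones and the minimum occupied value has at least $n/4$ ones to flip to reach the missing value below it. This yields the uniform bound $\frac{1}{(n+1)4e}$ independently of the structure of $I_t$, which is the ingredient your proposal is missing.
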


\Cref{lem:time-for-expanding-toward-the-PF-borders-OMM-OJZJ} below translates \Cref{lem:time-for-expanding-toward-the-PF-borders} to this setting and also uses its terminology.
It shows that the extremal solutions in the population are still at least~$\sqrt{n}$ away from the borders.
We need to re-define though what these terms exactly mean in the setting of \omm and \ojzj.

The extremal solutions of the Pareto front are $1^n$ and $0^n$ instead of $1^n$ and $1^{n/2}0^{n/2}$ for \cocz. Furthermore, for all $z \in \{0, 1\}^n$, let the \emph{distance of~$z$ to the Pareto borders} be $d_{\mathrm{PF}}(z) \coloneqq \min \bigl\{\ones{z}, n - \ones{z}\bigr\}$ which is also slightly different to the case of \cocz above.
However, the \emph{distance of the algorithm to the Pareto borders in iteration~$t$} is defined as $d_{\mathrm{PF}}(P^{(t)}) \coloneqq \min_{z \in P^{(t)}} d_{\mathrm{PF}}(z)$ for all $t \in \mathbb{N}$ in the same way as in \Cref{sec:theory-results}.

\begin{restatable}{lemma}{timeForExpandingTowardThePFBordersOMMOJZJ}
  \label{lem:time-for-expanding-toward-the-PF-borders-OMM-OJZJ}
  Consider the modified (G)SEMO maximizing \omm or $\ojzjk$ for $1 < k \leq n/4$. Let $c \in \R_{> 0}$ be a sufficiently small constant.
  With probability at least $1 - \Theta(n^{-2/5})$, for all iterations $t \in [0 .. c n^2 \ln n]$, the distance of the algorithm to the Pareto borders in iteration~$t$ is at least $\sqrt{n}$ for \omm and at least $\max\{\sqrt{n}, k\}$ for $\ojzjk$.
\end{restatable}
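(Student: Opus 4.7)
The plan is to mirror the proof of \Cref{lem:time-for-expanding-toward-the-PF-borders}, with the total one-count $\ones{z}$ playing the role formerly played by $g_2(z)$. Set $j_t^{-} \coloneqq \min_{z \in P^{(t)}} \ones{z}$ and $j_t^{+} \coloneqq \max_{z \in P^{(t)}} \ones{z}$, so that $d_{\mathrm{PF}}(P^{(t)}) = \min\{j_t^{-}, n - j_t^{+}\}$. By the symmetry under bitwise complementation, it suffices to establish a lower bound on $j_t^{-}$; the bound on $n - j_t^{+}$ follows by applying the same argument to the complemented process. The initial individual is uniform in $\{0,1\}^n$, so \Cref{thm:chernoff-lower-bound} and its symmetric analogue give $j_0^{-}, n - j_0^{+} \in [n/4, 3n/4]$ with probability $1 - \exp(-\Omega(n))$; for $\ojzjk$ with $k \leq n/4$ this also guarantees that $x^{(0)}$ lies on the flat part of the Pareto front.

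\paragraph{Main steps.}
The crux is to upper-bound the per-iteration probability that $j_t^{-}$ decrements. Since the modified (G)SEMO picks each index $i \in [0 .. n]$ with probability $\frac{1}{n+1}$ and $P^{(t)}$ contains at most one individual per value of $\ones{\cdot}$, decrementing $j_t^{-} = j$ in one iteration either (a) picks $i = j$ and mutates the unique $x$ with $\ones{x} = j$ to some $y$ with $\ones{y} < j$, where the single-bit flip contributes $\frac{j}{n}(1-\frac{1}{n})^{n-1}$ and multi-bit flips add only $O(j^2/n^2)$; or (b) picks $i > j$ and still produces $\ones{y} < j$, which requires at least two specific bit flips and thus contributes $O(n^{-2})$. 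Hence $p_j \coloneqq \Pr[\text{decrement at level } j] \leq \frac{j}{n(n+1)} + O(n^{-2}) = O(j/n^2)$. Let $D_j$ be the number of iterations spent at level $j$; then $D_j$ stochastically dominates $\Geom(p_j)$, so the first iteration $T$ at which $j_t^{-} \leq \sqrt{n}$ stochastically dominates $\sum_{j=\floor{\sqrt{n}}+1}^{j_0^{-}} D_j$. A direct computation via \Cref{thm:sums-to-integrals} gives $\E[T] = \Omega(n^2 \log n)$ and $s \coloneqq \sum_j 1/p_j^2 = O(n^{7/2})$. Choosing $c$ sufficiently small so that $\lambda \coloneqq \E[T] - c n^2 \ln n = \Omega(n^2 \log n)$, the lower-tail bound of \Cref{thm:geometric-sum-bound} yields $\Pr[T \leq c n^2 \ln n] \leq \exp(-\Omega(\sqrt{n}(\log n)^2))$, much smaller than $n^{-2/5}$.

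\paragraph{The gap case and the main obstacle.}
For $\ojzjk$ there is a second failure mode: direct generation of $0^n$ (or $1^n$), which would collapse $j_t^{-}$ to $0$. Since every gap-region individual with $\ones{z} \in [1 .. k-1]$ is strictly dominated by any flat-part Pareto optimum and therefore cannot persist in $P^{(t)}$, the event $j_t^{-} < k$ forces $0^n \in P^{(t)}$. The one-step probability of producing $0^n$ from any parent is at most $\frac{1}{n+1}\sum_{j \geq j_t^{-}} n^{-j} \leq 2 n^{-(j_t^{-}+1)} \leq 2 n^{-(k+1)}$, so a union bound over $c n^2 \ln n$ iterations gives failure probability $O(\ln n / n^{k-1}) = o(n^{-2/5})$ for every $k \geq 2$; the same estimate covers $1^n$. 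A final union bound over the initial-condition event, the two one-sided concentration statements, and this anti-absorption bound yields overall failure probability $O(n^{-2/5})$, as claimed. The main obstacle I expect is the careful bookkeeping of case~(b): one must verify that the $O(j/n^2)$ bound on $p_j$ survives summation over all parents $x$ and all mutation outcomes, so that correlated rare events originating from parents well above level $j$ cannot accelerate the decrement beyond the one-bit rate and thereby shrink $\sum_j D_j$ below the required $\Omega(n^2 \log n)$.
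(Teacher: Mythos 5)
Your overall plan (level-by-level waiting times for $\min_{z\in P^{(t)}}\ones{z}$, concentration via \Cref{thm:geometric-sum-bound}, symmetry for the upper border, and a separate anti-absorption bound for $0^n$/$1^n$ in the $\ojzjk$ gap case) is the right one, and your handling of the initialization, of the case $k>\sqrt{n}$, and of the creation of $0^n$ matches the paper. For the SEMO (one-bit mutation) your argument is essentially complete. However, for the GSEMO there is a genuine gap, and it is not the one you flag. The obstacle is not the bookkeeping of case~(b) (your per-step estimate $p_j=O(j/n^2)$ does survive the summation over parents), but the applicability of \Cref{thm:geometric-sum-bound} itself: that theorem requires \emph{independent geometric} random variables with positive success probabilities, whereas under standard bit mutation a single offspring can decrease the minimal one-count by several units at once. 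Then entire levels are skipped, so several of your $D_j$ equal $0$ simultaneously; in particular $D_j$ does \emph{not} stochastically dominate $\Geom(p_j)$, the family $\{D_j\}_j$ is positively correlated through these joint zeroes, and the lower-tail bound $\Pr[\sum_j D_j\le\E-\lambda]\le\exp(-\lambda^2/2s)$ cannot be invoked. This is precisely why the paper's proof (of \Cref{lem:time-for-expanding-toward-the-PF-borders}, to which this lemma reduces) spends most of its length on the GSEMO case: it (i) restricts attention to the window where the minimal one-count lies in $[\sqrt{n}\,..\,2n^{3/5}]$, so that a decrement by $11$ or more within $n^3$ iterations has probability only $O(n^{-2/5})$ and can be conditioned away, and (ii) couples the process to a variant that, after every decrement, refills the population with individuals at each of the next few levels, which makes the (bucketed, decrement-at-most-$10$) waiting times genuinely independent geometrics with success probabilities controlled from both sides.

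A symptom of the missing step is your final error budget: every failure probability you actually derive is $\exp(-\Omega(\sqrt{n}))$ or $O(\log n\cdot n^{1-k})$, yet you assert the claimed $\Theta(n^{-2/5})$ bound. In the paper, the $\Theta(n^{-2/5})$ term \emph{is} the cost of conditioning on ``no decrement by $\ge 11$ while the minimal one-count is in $[\sqrt{n}\,..\,2n^{3/5}]$''; it does not arise anywhere in your argument because you implicitly assumed the level decomposition cannot be short-circuited by multi-bit jumps. Note also that restricting to the sub-range $[\sqrt{n}\,..\,2n^{3/5}]$ is not merely a convenience: over your full range up to $j_0^-=\Theta(n)$, decrements of size $11$ occur with per-iteration probability $\Theta(1/n)$ at the top levels, so the no-big-jump event cannot be salvaged there, and one must instead observe that the sub-range alone already contributes $\E[T]=\Omega(n^2\log n)$.
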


\subsection{\omm}
\label{sec:additional-run-time-results:omm}

For \omm we prove a bound of $\Omega(n^2 \log n)$ objective-function evaluations, with high probability (\Cref{thm:gsemo-lower-bound-OMM}).
This matches the $O(n^2 \log n)$ bound by \cite{GielL10}.
The tight $\Theta(n^2 \log n)$ runtime for the GSEMO was already proven by \cite{BossekS24} as a special case of the single-objective problem of quality diversity on the \textsc{OneMax} benchmark.
The bound $\Theta(n^2 \log n)$ for the SEMO was shown by \cite{OsunaGNS20}.

\begin{restatable}{theorem}{gsemoLowerBoundOMM}
  \label{thm:gsemo-lower-bound-OMM}
  With probability $1 - \Theta(n^{-1})$, the (G)SEMO maximizes \omm in $\Omega(n^2 \log n)$ objective-function evaluations.
\end{restatable}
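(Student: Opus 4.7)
The plan is to mimic the proof of \Cref{thm:gsemo-cocz-lower-bound}, substituting \Cref{lem:time-to-Pareto-front-OMM-OJZJ} and \Cref{lem:time-for-expanding-toward-the-PF-borders-OMM-OJZJ} for their \cocz counterparts. Let $T$ denote the first iteration at which the original (G)SEMO has at least $n/2$ individuals in its population. I start counting objective-function evaluations from iteration $T$ and aim to show that, with probability $1 - \Theta(n^{-1})$, at iteration $T$ the population has extremal $|\cdot|_1$-values at distance at least $\sqrt{n}$ from $0$ and $n$, and that the algorithm still requires $\Omega(n^2 \log n)$ further iterations to cover the Pareto front. For the starting state, the coupling-by-states argument already used in the proof of \Cref{thm:gsemo-cocz-lower-bound} applies verbatim: the modified and the original (G)SEMO traverse an identical sequence of population states, so the guarantees for the modified algorithm from the two lemmas above transfer unchanged to the state of the original algorithm at iteration $T$, yielding with probability $1 - \Theta(n^{-2/5})$ the desired ``good'' starting configuration.

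For the lower bound from this starting state, I would prove an \omm-analog of \Cref{lem:idealized-process-lower-bound}. A crucial simplification relative to \cocz is that for \omm all individuals are Pareto-optimal, and any two individuals with distinct $|\cdot|_1$-values are incomparable; hence once an individual enters the population it is never removed, and from iteration $T$ on the population size is monotonically non-decreasing and remains in $\Theta(n)$. To cover the Pareto point $(0, n)$, the minimum of $|\cdot|_1$ in the population must decrease from some $i_0 \geq \sqrt{n}$ down to $0$ (and symmetrically for the maximum). The dominant way to reduce the minimum from $i$ to $i - 1$ is to select the unique current minimum individual (probability $O(1/n)$ since $|P^{(t)}| = \Theta(n)$) and then flip exactly one of its $i$ one-bits (probability $\Theta(i/n)$ for the GSEMO and $i/n$ for the SEMO); contributions from multi-bit mutations, or from parents with strictly more ones, sum to only $O(i^2/n^3)$ per iteration and are thus subdominant for $i \leq \sqrt{n}$. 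The waiting time for one such step therefore stochastically dominates a geometric random variable with success probability $O(i/n^2)$, and summing the expectations over $i \in [1 .. \sqrt{n}]$ yields $\Omega(n^2 H_{\sqrt{n}}) = \Omega(n^2 \log n)$. Applying \Cref{thm:geometric-sum-bound} to this sum of independent geometric random variables converts the expectation bound into a high-probability tail bound, which combined with the starting-state guarantee delivers the overall probability $1 - \Theta(n^{-1})$ claimed in the theorem.

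The main obstacle I anticipate is verifying that multi-bit-flip events in the GSEMO do not disturb the harmonic-sum structure responsible for the $\log n$ factor: a parent with $j > i$ ones can, in principle, produce an offspring with $|x|_1 < i$ by flipping several one-bits at once, and one has to bound the total contribution of such events, summed over all eligible parents, by $O(i^2/n^3)$ so that it remains dominated by the $\Theta(i/n^2)$ single-bit contribution from the minimum individual. This is a careful but essentially routine calculation, and it is the only place where the \omm argument genuinely differs from the \cocz one; everything else is a near-verbatim translation of the \cocz proof.
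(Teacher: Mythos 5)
Your overall architecture is exactly the paper's: establish the good starting state (linear population size, distance at least $\sqrt{n}$ to the Pareto borders) via \Cref{lem:time-to-Pareto-front-OMM-OJZJ,lem:time-for-expanding-toward-the-PF-borders-OMM-OJZJ} together with the state-coupling between the modified and the original algorithm, and then run a harmonic-sum lower bound from that state using \Cref{thm:geometric-sum-bound}. The paper packages this second step as \Cref{lem:idealized-process-lower-bound-OMM}, proved by pointing back at the \cocz version, \Cref{lem:idealized-process-lower-bound}.

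The gap is in your sketch of that second step. You apply \Cref{thm:geometric-sum-bound} to ``this sum of independent geometric random variables,'' but for the GSEMO the waiting times $D_i$ for the minimal number of ones to drop below $i$ are \emph{not} independent geometric variables: a multi-bit mutation can decrease the minimum by more than one (so the set of levels actually visited is random), and the composition of the population just above the current minimum --- which determines the success probability of the next step --- depends on the history of how the minimum was reached. Bounding the per-iteration probability of a multi-bit improvement by $O(i^2/n^3)$ controls the drift but does not remove this dependence. The paper's proof of \Cref{lem:idealized-process-lower-bound} spends most of its length on exactly this point: it first shows that within $n^3$ iterations the distance never drops by $\lceil \sqrt{n}/2 \rceil$ in one step (so the window below $\lceil\sqrt{n}\rceil$ is not skipped) and, once the distance is at most $\lceil\sqrt{n}\rceil$, never by $8$ in one step --- the latter event holding with probability $1 - 2n^{-1}$ and being the actual source of the $\Theta(n^{-1})$ failure probability in the theorem, which your sketch leaves unexplained. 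It then couples the real run to a \emph{variant} algorithm that caps each decrease at $7$ and refills the population at the distances just above the new minimum, so that the variant's waiting times are genuinely independent geometrics to which \Cref{thm:geometric-sum-bound} applies and whose hitting time is dominated by that of the real run. You would need this (or an equivalent) construction. A minor further omission: an offspring can also reduce $d_{\mathrm{PF}}$ by flipping almost all zero-bits of a parent and landing near the \emph{opposite} border; the paper bounds this case explicitly, and your sketch only considers reducing the number of ones.
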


The proof of \Cref{thm:gsemo-lower-bound-OMM} is very similar to that of \Cref{thm:gsemo-cocz-lower-bound}. We use \Cref{lem:time-to-Pareto-front-OMM-OJZJ,lem:time-for-expanding-toward-the-PF-borders-OMM-OJZJ} for \omm from above.

\begin{restatable}{lemma}{idealizedProcessLowerBoundOMM}
  \label{lem:idealized-process-lower-bound-OMM}
  Consider the (G)SEMO maximizing \omm, starting with a population size of $\Theta(n)$ on the Pareto front and the distance to the Pareto borders is at least $\sqrt{n}$. 
  Then with probability $1 - \Theta(n^{-1})$, the algorithm covers the Pareto front after $\Omega(n^2 \log n)$ objective-function evaluations.
\end{restatable}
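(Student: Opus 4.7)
The plan is to follow the same template as the proof of \Cref{lem:idealized-process-lower-bound} for \cocz, but adapted to the structural properties of \omm, namely that every bit string is Pareto-optimal and any two individuals with distinct ones counts are incomparable. Since the Pareto front has $n+1$ objective values and the starting configuration has the extremal ones counts at distance at least $\sqrt{n}$ from both borders, covering the front requires the algorithm to produce, on each side, all the missing extremal Pareto optima. It suffices to lower bound the time to cover only one side (say, the low-ones side), as the other side can only add to the runtime.

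The first step is a population-size argument: in \omm, no individual is ever discarded once it enters the population, because a new offspring only removes individuals of equal objective value, and distinct-ones individuals are always incomparable. Hence the population size is monotonically non-decreasing, starts at $\Theta(n)$ by assumption, and is bounded above by $n+1$, so $|P^{(t)}| = \Theta(n)$ throughout. The second step is a per-iteration success-probability bound. Let $d_t := \min_{x \in P^{(t)}} \ones{x}$; we bound the probability of producing an offspring $y$ with $\ones{y} = d_t - 1$. Partition the population by excess $r = \ones{x} - d_t \geq 0$: the unique individual with $r=0$ contributes at most $\frac{d_t}{en}$ (one specific one must flip, all other bits stable), and any individual with $r \geq 1$ contributes at most $O(n^{-(r+1)})$ since at least $r+1$ ones must flip. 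Summing and dividing by $|P^{(t)}| = \Omega(n)$, the per-iteration probability of strictly decreasing the minimum ones count is $O(d_t/n^2)$. The SEMO case is even simpler since only the extremal individual with a 1-bit flip can reduce the minimum.

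The third step is a geometric decomposition. Let $D_d$, for $d = 1, \ldots, \sqrt{n}$, denote the number of iterations needed to reduce the minimum from $d$ to $d-1$. By the previous step, $D_d$ stochastically dominates a geometric random variable with success probability $p_d = C d / n^2$ for some constant $C>0$, independently across $d$ (by a standard restart-style coupling). The total time satisfies $T \geq \sum_{d=1}^{\sqrt{n}} D_d =: T^{\star}$. Then
\begin{align*}
\E[T^{\star}] = \sum_{d=1}^{\sqrt{n}} \frac{1}{p_d} = \frac{n^2}{C} \sum_{d=1}^{\sqrt{n}} \frac{1}{d} = \Omega(n^2 \log n),
\end{align*}
using \Cref{thm:sums-to-integrals}. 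To move from expectation to high probability, apply the lower-tail part of \Cref{thm:geometric-sum-bound}: one computes $s = \sum_{d=1}^{\sqrt{n}} 1/p_d^2 = \Theta(n^4)$, and for $\lambda = \E[T^{\star}]/2 = \Theta(n^2 \log n)$ one gets $\lambda^2/(2s) = \Theta((\log n)^2)$, whence $\Pr[T^{\star} \leq \E[T^{\star}]/2] \leq \exp(-\Omega((\log n)^2)) = n^{-\omega(1)}$, comfortably beating the required $\Theta(n^{-1})$.

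The main obstacle will be the per-iteration success-probability calculation: one must carefully sum the contributions of all parents, including those far from the minimum, and ensure that the dominant term is the $\Theta(d_t/n)$ contribution from the unique extremal individual rather than from the many middle individuals. This requires the observation that while there are $\Theta(n)$ interior individuals each with small contribution, their total contribution is still absorbed into the $O(d_t/n^2)$ bound because the $r \geq 1$ contributions decay geometrically in $n$. Once this bound is in place, the rest is a clean application of \Cref{thm:geometric-sum-bound}, essentially identical in form to the corresponding step of \Cref{lem:idealized-process-lower-bound} for \cocz.
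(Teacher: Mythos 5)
Your overall route is the same as the paper's (which itself reduces \omm to the argument of \Cref{lem:idealized-process-lower-bound}): track the minimal distance to a Pareto border, bound the per-iteration improvement probability by $O(d_t/n^2)$ using the linear population size, decompose into geometric waiting times, and apply \Cref{thm:geometric-sum-bound}. However, there is a genuine gap in the decomposition step. You implicitly assume that the minimum number of ones decreases by exactly one per improving event, so that $D_d$ for $d=1,\dots,\sqrt{n}$ are all positive geometric variables with success probability $\Theta(d/n^2)$. For the GSEMO this is false: standard bit mutation can decrease the minimum by $2$ (or more) in one iteration, and a short calculation shows this is not a negligible event --- the expected time spent at level $d$ is $\Theta(n^2/d)$ while the per-iteration probability of a $2$-step jump is $\Theta(d^2/n^3)$, so over the whole phase a \emph{constant} number of levels are skipped in expectation. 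For skipped levels $D_d=0$, which does not dominate any geometric variable, so \Cref{thm:geometric-sum-bound} cannot be applied to your family $\{D_d\}$ as defined. The paper's proof (via \Cref{lem:idealized-process-lower-bound}) deals with exactly this: it first shows that, with probability $1-O(n^{-1})$, no single iteration in the first $n^3$ iterations decreases the distance by $8$ or more once it is below $\lceil\sqrt{n}\rceil$, and then couples the process to a variant in which jumps are capped at $7$ and the population is artificially ``topped up'' near the border after each improvement, so that the (at least $\sqrt{n}/7$ many) stage lengths are genuinely independent geometrics. Your appeal to ``a standard restart-style coupling'' is precisely the non-trivial part that this variant construction makes rigorous.

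A related consequence: your claimed failure probability $n^{-\omega(1)}$ is too optimistic. The concentration step from \Cref{thm:geometric-sum-bound} indeed contributes only $\exp(-\Omega(\log^2 n))$, but the dominant failure term is the event that a large jump occurs, whose union bound over the relevant time horizon costs $\Theta(n^{-1})$ --- this is exactly where the $1-\Theta(n^{-1})$ in the lemma statement comes from. Finally, a minor imprecision: the contribution of a parent with excess $r\ge 1$ is not $O(n^{-(r+1)})$ but $O(((d_t+r)/n)^{r+1})$, e.g.\ $\Theta(d_t^2/n^2)$ for $r=1$; your final bound $O(d_t/n^2)$ still holds since $d_t\le\sqrt{n}$, but the intermediate claim should be corrected.
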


By combining \Cref{lem:time-to-Pareto-front-OMM-OJZJ,lem:time-for-expanding-toward-the-PF-borders-OMM-OJZJ,lem:idealized-process-lower-bound-OMM}, we obtain the proof for \Cref{thm:gsemo-lower-bound-OMM} in a similar way as the proof of \Cref{thm:gsemo-cocz-lower-bound}.

\subsection{\ojzj}
\label{sec:additional-run-time-results:ojzj}

For \ojzj, we only consider the GSEMO, as the SEMO does not cover the Pareto front with high probability (\cite{DoerrZ21aaai}), due to the deceptive nature of the benchmark and the 1-bit mutation being incapable of creating the extremal Pareto optima from non-extremal Pareto optima.
For gap size $k \in [2 .. \frac{n}{4}]$, we prove a bound of $\Omega(n^{k+1})$ objective-function evaluations, with high probability (thm:gsemo-lower-bound-OJZJ).
This matches the bound $O(n^{k + 1})$ for all of these values of~$k$ by \cite{DoerrZ21aaai}.
Moreover, for $k \in [4 .. \frac{n}{2} - 1]$, \cite{DoerrZ21aaai} proved already a matching bound of $\Omega((n - 2k)n^k)$.
However, for $k \in \{2, 3\}$, our results are new.

\begin{restatable}{theorem}{gsemoLowerBoundOJZJ}
  \label{thm:gsemo-lower-bound-OJZJ}
  In expectation, the GSEMO maximizes $\ojzjk$ for $k \in [2 .. \frac{n}{4}]$ in $\Omega(n^{k+1})$ objective-function evaluations.
\end{restatable}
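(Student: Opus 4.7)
The plan is to combine \Cref{lem:time-to-Pareto-front-OMM-OJZJ} and \Cref{lem:time-for-expanding-toward-the-PF-borders-OMM-OJZJ} with a direct per-iteration probability computation to show that, once the GSEMO has a linear-size population on the inner Pareto front of $\ojzjk$, producing either of the extremal optima $0^n$ and $1^n$ happens only with probability $O(n^{-(k+1)})$ per iteration. A standard geometric-time argument then yields the claimed $\Omega(n^{k+1})$ expected runtime.

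The first step is to reach a ``nice state'' in which $|P^{(t)}| \geq n/2$ and every population member is an inner Pareto optimum, i.e., satisfies $\ones{x} \in [k..n-k]$. A Chernoff bound shows that the random initialization $x^{(0)}$ satisfies $\ones{x^{(0)}} \in [n/4, 3n/4] \subseteq [k..n-k]$ with probability $1-\exp(-\Omega(n))$ (using $k \leq n/4$), so $x^{(0)}$ lies on the inner Pareto front, distinct from $0^n$ and $1^n$. Conditioning on this, \Cref{lem:time-to-Pareto-front-OMM-OJZJ} yields $|P^{(t)}| \geq n/2$ within $O(n^2)$ iterations of the modified GSEMO with probability $1-\exp(-\Omega(\sqrt{n}))$; since every upper bound on the modified algorithm transfers to the original by the bookkeeping from \Cref{sec:theory-results}, the same holds for the original GSEMO. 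In parallel, \Cref{lem:time-for-expanding-toward-the-PF-borders-OMM-OJZJ} guarantees that during the first $cn^2 \ln n$ iterations every $x \in P^{(t)}$ satisfies $\min\{\ones{x}, n-\ones{x}\} \geq \max\{\sqrt{n}, k\}$; a union bound over these iterations rules out any direct shortcut to $0^n$ or $1^n$ during this warm-up phase, since such a jump would require flipping at least $\sqrt{n}$ specific bits in a single mutation.

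The second step is the per-iteration probability bound from a nice state. The probability of mutating parent $x$ into $1^n$ is at most $(1/n)^{n-\ones{x}}$, and into $0^n$ at most $(1/n)^{\ones{x}}$. Because the objective value of $\ojzj$ depends only on $\ones{x}$, the members of $P^{(t)}$ carry pairwise distinct values of $\ones{x}\in[k..n-k]$, so summing over $x\in P^{(t)}$ produces a geometric series bounded by $O(n^{-k})$. Dividing by $|P^{(t)}|\geq n/2$ gives an overall per-iteration probability of at most $O(n^{-(k+1)})$ for creating either extremal optimum. Since Pareto-optimal individuals are never removed from $P^{(t)}$, this bound persists until termination. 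Conditioning on the $1-o(1)$ event that the nice state is reached in $O(n^2) = o(n^{k+1})$ iterations without having found $0^n$ or $1^n$, the subsequent time to produce either extremal optimum then stochastically dominates a geometric random variable with success probability $O(n^{-(k+1)})$, whose expectation dominates the warm-up contribution and yields $\E[T] = \Omega(n^{k+1})$.

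The main obstacle I anticipate is keeping the per-iteration bound tight throughout the very long post-warm-up phase: the argument crucially relies on both the distinctness of $\ones{x}$ on the $\ojzj$ Pareto front and the persistent lower bound $|P^{(t)}|\geq n/2$. I also need to verify that no non-Pareto-optimal individual with $\ones{x}\in(n-k,n)\cup(0,k)$ can survive in $P^{(t)}$ once an inner Pareto optimum is present, which follows directly from the dominance structure of $\ojzj$ but must be handled carefully when accounting for the short window between the end of the distance-lemma horizon $cn^2\ln n$ and the onset of the main lower-bound phase.
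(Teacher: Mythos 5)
Your proposal is correct and follows essentially the same route as the paper: a Chernoff bound on the initialization, \Cref{lem:time-to-Pareto-front-OMM-OJZJ} and \Cref{lem:time-for-expanding-toward-the-PF-borders-OMM-OJZJ} to reach a linear-size population of inner Pareto optima without touching $0^n$ or $1^n$, and then a per-iteration probability of $O(n^{-(k+1)})$ for creating an extremal optimum (your geometric-series summation over the distinct $\ones{x}$-values is just an inlined version of the paper's \Cref{lem:idealized-process-lower-bound-OJZJ}, which singles out the parent with $n-k$ ones and bounds the rest by $n^{-(k+1)}$). The concluding geometric waiting-time argument matches the paper's as well.
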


The proof of \Cref{thm:gsemo-lower-bound-OJZJ} makes use of \Cref{lem:idealized-process-lower-bound-OJZJ} below, which shows that it takes the GSEMO a lot of time to find the all-ones and all-zeros bit string.

\begin{restatable}{lemma}{idealizedProcessLowerBoundOJZJ}
  \label{lem:idealized-process-lower-bound-OJZJ}
  Consider the GSEMO maximizing $\ojzjk$ for $1 < k \leq n/4$, starting with a population size of $\Theta(n)$ on the Pareto front, but neither $0^n$ nor $1^n$ are in the population. 
  Then the algorithm covers the Pareto front in expectation after $\Omega(n^{k+1})$ objective-function evaluations.
\end{restatable}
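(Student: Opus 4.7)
The plan is to show that covering the Pareto front requires creating the two extremal optima $0^n$ and $1^n$, and that in every iteration the conditional probability of producing either is $O(n^{-(k+1)})$, which then yields expected runtime $\Omega(n^{k+1})$. The first step is to verify that the population stays entirely on the Pareto front throughout the run. By hypothesis the initial population lies on the Pareto front and excludes $0^n$ and $1^n$. A short case analysis from the definition of $\ojzjk$ shows that any non-Pareto-optimal offspring (i.e., one with $|y|_1 \in [1, k-1] \cup [n-k+1, n-1]$) is strictly dominated by every inner Pareto optimum and is incomparable with $0^n$ and $1^n$, so it neither removes anyone from the population nor is itself accepted. Moreover, distinct inner Pareto optima correspond to distinct values of $|x|_1 \in [k, n-k]$ and hence to distinct objective values, so the algorithm's replacement rule maintains at most one individual per $|x|_1$ value. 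In particular $|P^{(t)}|$ is non-decreasing and stays $\Theta(n)$ throughout, since the Pareto front has only $O(n)$ points.

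I then bound the conditional probability that a single iteration produces $1^n$. Standard bit mutation turns a parent $x$ with $|x|_1 = i$ into $1^n$ only by flipping all $n - i$ zero-bits and none of the one-bits, with probability $n^{-(n-i)}(1 - 1/n)^{i} \le n^{-(n-i)}$. Since each value of $|x|_1 \in [k, n-k]$ contributes at most one summand, a geometric-series estimate gives
\begin{equation*}
  \sum_{x \in P^{(t)}} n^{-(n-|x|_1)} \;\le\; \sum_{j=k}^{n-k} n^{-j} \;\le\; 2 n^{-k}
\end{equation*}
for $n \ge 2$. Dividing by $|P^{(t)}| = \Theta(n)$ yields that the probability of creating $1^n$ in a given iteration is at most $p := \Theta(n^{-(k+1)})$; the symmetric argument gives the same bound for $0^n$.

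Finally, let $T$ denote the first iteration in which either $0^n$ or $1^n$ is created. Because the success event has conditional probability at most $2p$ in every iteration regardless of the past, $\Pr[T > m] \ge (1 - 2p)^m$, and taking $m = \lfloor 1/(4p) \rfloor$ gives $\Pr[T > m] \ge 1/e$, hence $\E[T] \ge m/e = \Omega(n^{k+1})$. The algorithm cannot cover the Pareto front before iteration $T$, so the lemma follows. The main subtlety is the probability bound: even though the population has linear size, the sum of the individual mutation probabilities is dominated by the single term $i = n - k$ thanks to the geometric factor $n^{-(n-i)}$, so the denominator $|P^{(t)}| = \Theta(n)$ is precisely what pushes the per-iteration probability from $n^{-k}$ down to $n^{-(k+1)}$. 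Everything else is either provided by the hypotheses (the $\Theta(n)$ starting size) or by the structural observation that the population is monotonically non-decreasing while it sits on the Pareto front.
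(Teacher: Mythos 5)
Your proof is correct and follows essentially the same route as the paper's: bound the per-iteration probability of creating $0^n$ or $1^n$ by $O(n^{-(k+1)})$ using the $\Theta(n)$ population size, and conclude by a waiting-time argument. The only cosmetic difference is that you sum a geometric series over all admissible parents where the paper splits into the parent with exactly $n-k$ ones (selected with probability $1/\Theta(n)$, needing a $k$-bit flip) versus all other parents (needing at least a $(k+1)$-bit flip); your version also spells out more carefully why the population remains on the inner Pareto front with size $\Theta(n)$, which the paper leaves implicit.
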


\Cref{thm:gsemo-lower-bound-OJZJ} can be proven in a completely similar way as \Cref{thm:gsemo-cocz-lower-bound,thm:gsemo-lower-bound-OMM} by using \Cref{lem:time-to-Pareto-front-OMM-OJZJ,lem:time-for-expanding-toward-the-PF-borders-OMM-OJZJ} for $\ojzjk$ when $1<k\leq n/4$ and then following \Cref{lem:idealized-process-lower-bound-OJZJ}.

\section{Conclusion}
\label{sec:conclusion}

We studied the population dynamics of the (G)SEMO, that is, the size and shape of its population over time.
For the \cocz benchmark, we proved that the algorithm has a population size linear in the size of the Pareto front (\Cref{cor:time-to-spread}) while it is still sufficiently far away from covering the entire Pareto front (\Cref{lem:time-for-expanding-toward-the-PF-borders}).
Covering these remaining solutions takes at least $\Omega(n^2 \log n)$ iterations (\Cref{thm:gsemo-cocz-lower-bound}).
Since a matching upper bound exists, this result is tight.

Our proof strategy relies on defining a modified process that allows an easier estimate of the probability to select a useful parent for making progress.
This modification affects the absolute number of iterations but not the relative order of state changes.
Thus, insights into state behavior for the modification also translate directly back to the original (G)SEMO.
We believe that this is an interesting strategy that may be useful for other settings with dynamic quantities.

We show that our insights for \cocz also transfer to \omm and \ojzj, where we derive lower bounds that match known upper bounds.
Although most of these lower bounds were already known, our proof strategy provides a different angle for deriving them.
Moreover, our lower bounds for gap sizes $k \in \{2, 3\}$ for $\ojzjk$ are new.
Since we prove a general lower bound for a large range of~$k$, all of which are tight, our method captures the true nature of the (G)SEMO well.

For future work, it would be interesting to derive lower bounds for the population size while the (G)SEMO is approaching the Pareto front.
In this article, we only derive upper bounds for this time (\Cref{cor:time-to-spread}).
Lower bounds would give us a clearer picture how wasteful the algorithm is in terms of function evaluations before reaching the Pareto front.

Another interesting open problem is to derive tight lower bounds of the GSEMO for the \textsc{LeadingOnesTrailingZeros} (LOTZ) benchmark.
For this setting, so far, only the bound $O(n^3)$ \cite{Giel03} exists (but a $\Theta(n^3)$ bound for the SEMO).
In contrast to the setting in this article, for LOTZ, it is far more likely for the GSEMO to get closer to the Pareto front than to create an incomparable offspring.
Only once the Pareto front is reached, these two probabilities are in the same order.
Hence, the dynamics seem to be somewhat different from the cases we consider here.

Our result can also serve as a stepping stone towards the deeper understanding of the population dynamics of other MOEAs, like the NSGA-II. 
However, since the NSGA-II has a fixed population size,
parts of the analysis need to focus on how duplicate entries in the population are treated, which is a separate topic and thus left for future work.
}

\bibliographystyle{named}
\bibliography{ich_master,alles_ea_master,rest}

\cleardoublepage
\appendix
\onecolumn

\section*{Supplementary Material for Paper 5678 -- ``Understanding the Population Dynamics of the (G)SEMO: A Tight Lower Runtime Bound on the COCZ Benchmark''}

This document contains the proofs that we omitted in the main paper, due to space restrictions.
It is meant to be read at the reviewer's discretion only.

For the sake of convenience, the statements are restated with the same number that they have in the main paper.

\section{Runtime Analysis on COCZ}
\label{sec:appendix:coczAnalysis}

\probabilityToSpread*

\begin{proof}
  Consider the following cases:
  \begin{itemize}
    \item[(1)]
          Suppose that there is an individual $x' \in P^{(t)}$ with $\onesa{x'}=\ell$ and $\onesb{x'} < n/4-\delta n/4$. Since $Z_t < \delta n/2$, one finds $k \in [0,\delta n/2]$ such that there is no individual $y'$ with $\onesa{y'}=\ell$ and $\onesb{y'} = \lceil{n/4-\delta n/4 + k}\rceil$. Hence, there is an individual $x$ with $\onesa{x}=\ell$ and $\onesb{x} <\lceil{n/4+\delta n/4}\rceil$, but no individual $y$ with $\onesa{y}=\ell$ and $\onesb{y} = \onesb{x} + 1$.
    \item[(2)]
          Suppose that there is no individual $x' \in P^{(t)}$ with $\onesa{x'}=\ell$ and $\onesb{x'} < n/4-\delta n/4$. Then one finds an individual $x \in P^{(t)}$ with $\onesa{x}=\ell$ and $\onesb{x} \geq n/4-\delta n/4$, but no $y \in P^{(t)}$ with $\onesa{y}=\ell$ and $\onesb{y} = \onesb{x} - 1$.
  \end{itemize}
  To increase $Z_t$, one may choose $x$ as parent and create $y$ by flipping one of at least $\lceil{n/4-\delta n/4}\rceil$ many bits and keeping the remaining bits unchanged. For both modified algorithms, this happens with probability at least
  \begin{align*}
     & \frac{1}{n/2+1} \cdot \frac{\ceil{n/4-\delta n/4}}{n} \cdot \left(1-\frac{1}{n}\right)^{n-1} \geq \frac{1}{n/2+1} \cdot \frac{n/4-\delta n/4}{ne} = \frac{1}{n/2+1}\cdot \left(\frac{1-\delta}{4e}\right). \qedhere 
  \end{align*}
\end{proof}

\timeToParetoFront*

\begin{proof}
  Denote by the random variable $T$ the number of iterations to create a first Pareto optimal individual. We define the process $X_t \coloneqq n/2-\max_{x \in P^{(t)}} \onesa{x}$ and a second process $Z_t \coloneqq |\{x \in P^{(t)} \mid \onesa{x}=n/2-X_t\}|$.
  Note that $X_t$ is the smallest possible number of zeros in the first half of an individual from the current population $P^{(t)}$, and $Z_t$ counts the number of those individuals in $P^{(t)}$. Note that $X_t$ cannot increase since an individual $y$ with a maximum number of ones in its first half has a maximum sum of both objective values (which is $n/2+2g_1(y)$) and hence, is non-dominated. Moreover, $Z_t$ can only decrease if $X_t$ decreases, and one reaches the Pareto front if $X_t$ becomes zero. We first show that it takes an expected number of $O(n)$ iterations to increase $Z_t$ to $\sqrt{n/X_t}$ or to decrease $X_t$. Note that if $X_t$ decreases then $Z_t$ decreases to one.  If $Z_t < \sqrt{n/X_t}$ then $Z_t$ is increased with probability at least  $\frac{1}{n/2+1} \cdot \frac{1}{5e}$ by Lemma~\ref{lem:probability-to-spread} applied to $\delta=1/5$ (for $n$ sufficiently large). Now, suppose that $Z_t \geq \sqrt{n/X_t}$. Then, to decrease $X_t$, one may choose such an individual as a parent and flip one of the $X_t$ zeros in the first half of the bit string. For both modified algorithms, this happens with probability at least
  \[
    \frac{\ceil{\sqrt{n/X_t}}}{n/2+1} \cdot \frac{X_t}{en} \geq \frac{\sqrt{X_t}}{en^{1.5}}.
  \]
  Note that $n/2$ such decrements of $X_t$ are sufficient to reach the Pareto front and if $X_t$ increases then $Z_t$ becomes $1$. Hence, we need at most
  \begin{align*}
    \sum\nolimits_{i=1}^{n/2}\left(\ceil*{\sqrt{\frac{n}{i}}}-1\right) & \leq \sum\nolimits_{i=1}^{n/2}\sqrt{\frac{n}{i}} = \sqrt{n}\sum\nolimits_{i=1}^{n/2}\sqrt{\frac{1}{i}}           \\
                                                                       & \leq \sqrt{n}\Bigl(1+\int_{1}^{n/2} \frac{1}{\sqrt{x}}dx\Bigr) = \sqrt{n}\Bigl(1+[2\sqrt{x}]_{x=1}^{x=n/2}\Bigr) \\
                                                                       & = \sqrt{n}(\sqrt{2n}-1) \leq 2n
  \end{align*}
  increments of $Z_t$ in total to find a first Pareto optimal point. Hence, $T$ is stochastically dominated by the sum of $2n$ independent geometrically distributed random variables $W_1, \ldots , W_{2n}$ with success probability $\frac{1}{(n/2+1)5e} \eqqcolon p$ (describing the increments of $Z_t$) and $n/2$ independent geometrically distributed random variables $Y_1, \ldots , Y_{n/2}$ (describing the decrements of $X_t$) such that $Y_i$ has success probability $\frac{\sqrt{i}}{en^{1.5}} \eqqcolon q_i$. Therefore, for $c \coloneqq 20e$, $V \coloneqq Y + W$ where $Y\coloneqq\sum\nolimits_{i=1}^{n/2} Y_i$ and $W\coloneqq\sum\nolimits_{i=1}^{2n} W_i$, we have
  \[
    \Pr[T \geq cn^2] \leq \Pr[V \geq cn^2] \leq \Pr\Bigl[\sum\nolimits_{i=1}^{n/2}Y_i \geq cn^2/2\Bigr] + \Pr\Bigl[\sum\nolimits_{i=1}^{2n}W_i \geq cn^2/2\Bigr].
  \]
  Now we estimate both summands separately. By linearity of expectation, $\E[W]= 2n \cdot (5e(n/2+1)) = 5en^2+ 10en$ and
  \[
    \E[Y] = \sum\nolimits_{i=1}^{n/2} \frac{en^{1.5}}{\sqrt{i}} = en^{1.5} \sum\nolimits_{i=1}^{n/2}\frac{1}{\sqrt{i}} \leq en^{1.5}(\sqrt{2n}-1) \leq \sqrt{2}en^2 \leq 2en^2
  \]
  where the first inequality can be derived similar as above. With $\lambda \coloneqq 5en^2-10en$, $p_{\min} = p$ and $s=\sum\nolimits_{i=1}^{2n} (1/p)^2 = 50e^2n(n/2+1)^2$, \Cref{thm:geometric-sum-bound} yields
  \begin{align*}
    \Pr\Bigl[W \geq cn^2/2\Bigr] & = \Pr\Bigl[W \geq 10en^2\Bigr] = \Pr\Bigl[W \geq \E[W] +5en^2-10en \Bigr]                                      \\
                                 & \leq \exp\left(-\frac{1}{4} \min\left\{\frac{\lambda^2}{s},\lambda p_{\min}\right\}\right) = \exp(-\Omega(n)).
  \end{align*}
  Further, we also obtain with \Cref{thm:geometric-sum-bound} that for $\lambda' \coloneqq 8en^2$, $s' \coloneqq \sum\nolimits_{i=1}^{n/2} (1/q_i)^2 = \sum\nolimits_{i=1}^{n/2}\frac{e^2n^3}{i} \leq e^2n^3(\ln(n/2)+1)$ and $q_{\min} = \frac{1}{en^{1.5}}$, we have
  \begin{align*}
    \Pr\Bigl[Y \geq cn^2/2\Bigr] = & \Pr\Bigl[Y \geq 10en^2\Bigr] = \Pr\Bigl[Y \geq \E[Y] +10en^2 - \E[Y]\Bigr]                                                             \\
                                   & \leq \Pr\Bigl[Y \geq \E[Y] +8en^2 \Bigr] \leq \exp\left(-\frac{1}{4} \min\left\{\frac{\lambda'^2}{s'},\lambda' q_{\min}\right\}\right) \\
                                   & \leq \exp\left(-\frac{1}{4} \min\left\{64n/(\ln(n/2)+1),8\sqrt{n}\right\}\right) = \exp(-\Omega(\sqrt{n})).
  \end{align*}
  Hence, we obtain
  \begin{align*}
    \Pr[T \geq 20en^2] & \leq \Pr[V \geq 20en^2] \leq \Pr[W \geq 10en^2] + \Pr[Y \geq 10en^2] \\
                       & \leq \exp(-\Omega(\sqrt{n})). \qedhere
  \end{align*}
\end{proof}

\timeToSpread*

\begin{proof}
  Let $T$ be the time until the population of the modified (G)SEMO contains at least $n/4$ individuals on the Pareto front. Denote by $S_t$ the number of Pareto optimal points in the current population $P^{(t)}$. Note that $S_t \geq 1$. As long as $S_t <  n/4$, we apply Lemma~\ref{lem:probability-to-spread} on $\delta=1/2$ and obtain that the probability is at least $r\coloneqq\frac
    {1}{n/2+1} \frac{1}{8e}$ to increase $S_t$ for both modified algorithms.
  Hence, $T_2$ is stochastically dominated by the sum of $\lceil{n/4}\rceil-1$ geometrically distributed random variables  $U_1, \ldots , U_{\lceil{n/4}\rceil-1}$ with success probability $r$. Let $U \coloneqq  \sum\nolimits_{i=1}^{\lceil{n/4}\rceil-1} U_i$. Note that $\E[U] = (\lceil{n/4}\rceil-1)(4en+8e) \leq n/4(4en+8e) = en^2 + 2en \leq 3en^2$. Now we use again \Cref{thm:geometric-sum-bound} and obtain for $\tilde{\lambda} \coloneqq en^2$, and $\tilde{s} \coloneqq (\lceil{n/4}\rceil-1)/r^2 = (\lceil{n/4}\rceil-1)(4en+8e)^2$
  \begin{align*}
    \Pr[T \geq 4en^2] & \leq \Pr[U \geq 4en^2] \leq \Pr[U \geq \E[U] + en^2]                                                                               \\
                      & \leq \exp\left(-\frac{1}{4} \min\left\{\frac{\tilde{\lambda}^2}{\tilde{s}},\tilde{\lambda} r\right\}\right) \leq \exp(-\Omega(n)),
  \end{align*}
  which proves the lemma.
\end{proof}

\timeForExpandingTowardThePFBorders*

\newcommand{\Xl}{X^{\leftarrow}}
\newcommand{\Xr}{X^{\rightarrow}}
\begin{proof}
  We consider the distance to either Pareto border separately and then conclude via a union bound over both cases.
  To this end, let $(\Xl_t)_{t \in \N}$ be such that for all $t \in \N$, we have $\Xl_t = \min_{z \in P^{(t)}} g_2(z)$, and analogously $(\Xr_t)_{t \in \N}$ such that for all $t \in \N$, we have $\Xr_t = \min_{z \in P^{(t)}} \bigl(\frac{n}{2} - g_2(z)\bigr)$.
  Note that for all $t \in \N$, we have $d_{\mathrm{PF}}(P^{(t)}) = \min \{\Xl_t, \Xr_t\}$.

  In what follows, we only analyze~$\Xl$, as the arguments for~$\Xr$ are identical when flipping all bits in the second half of the initial individual~$x^{(0)}$, which results in equally likely algorithm trajectories in either case, due to the uniform random choice of~$x^{(0)}$.

  Due to Chernoff bounds (\Cref{thm:chernoff-lower-bound}) applied to $g_2(x^{(0)}) = \nbOnes{x_{[n/2 .. n]}^{(0)}}$ with $\delta = \frac{1}{2}$, noting that $\E[g_2(x^{(0)})] = \frac{n}{4}$, we have with probability at least $1 - \exp(-\frac{n}{32})$ that $g_2(x^{(0)}) \geq \frac{n}{8}$ and thus $\Xl_0 \geq \frac{n}{8}$.
  For the rest of the proof, we implicitly condition on $\Xl_0 \geq \frac{n}{8}$.

  We make a case distinction with respect to which mutation operator we consider.
  We start with the modified SEMO, as it is simpler and since the case for the modified GSEMO follows the same outline but adds more complexity.

  \textbf{Modified SEMO.}
  We define $\floor{\frac{n}{8}} - \ceil{\sqrt{n}}$ geometric random variables $\{D_i\}_{i \in [\sqrt{n} .. n/8 - 1]}$, each of which represents the time that it takes the modified SEMO to reduce the current~$\Xl$ value.
  To this end, let $T_{\floor{n/8}} \coloneqq \inf \{t \in \N \mid \Xl_t < \floor{\frac{n}{8}}\}$.
  Moreover, for all $i \in [\floor{\frac{n}{8}} - \ceil{\sqrt{n}}]$, let $T_{\floor{n/8} - i} \coloneqq \inf \{t \in \N \mid t \geq T_{\floor{n/8} - i + 1} \land \Xl_t < \floor{\frac{n}{8}} - i\}$, and let $D_{\floor{n/8} - i} \coloneqq T_{\floor{n/8} - i} - T_{\floor{n/8} - i + 1}$.
  Note that since SEMO uses 1-bit flips, the stopping times $\{D_i\}_{i \in [\sqrt{n} .. n/8 - 1]}$ are all positive and independent.

  Let $i \in [\sqrt{n} .. n/8 - 1]$.
  Note that~$D_i$ follows a geometric distribution with a to-be-determined success probability~$p_i$, since each try to reduce the current~$\Xl$ value of~$i$ is identical and independent of any other try, as the reduction can only occur by using the (unique) individual~$z$ in the population with $g_2(z) = i$, as SEMO only uses 1-bit flips.
  The probability to choose~$z$ is by the definition of the modified SEMO exactly $\frac{1}{n/2 + 1}$.
  Given that~$z$ is chosen, the probability to reduce~$\Xl$ is given exactly by flipping one of the~$i$ $1$s, which has probability~$\frac{i}{n}$.
  Thus, $p_i = \frac{1}{n/2 + 1} \cdot \frac{i}{n}$.

  Let $T^{\star} \coloneqq \inf \{t \in \N \mid \Xl_t < \ceil{\sqrt{n}}\} - T_{\floor{n/8}}$, and note that $T^{\star} = \sum\nolimits_{i \in [\sqrt{n} .. n/8 - 1]} D_i$.
  We aim at applying \Cref{thm:geometric-sum-bound}, using the theorem's notation.
  To this end, using our exact bound for almost all success probabilities and \Cref{thm:sums-to-integrals}, we get for sufficiently large~$n$
  \begin{align*}
    \mu
     & \coloneqq\sum\nolimits_{i \in [\sqrt{n} .. n/8 - 1]} \E[D_i]
    = \sum\nolimits_{i \in [\sqrt{n} .. n/8 - 1]} \frac{1}{p_i}
    = \sum\nolimits_{i \in [\sqrt{n} .. n/8 - 1]} \frac{(n / 2 + 1) n}{i}                                                             \\
     & \geq \frac{n^2}{2} \sum\nolimits_{i \in [\sqrt{n} .. n/8 - 1]} \frac{1}{i}
    \geq \frac{n^2}{2} \int_{[\lceil \sqrt{n} \rceil, \lfloor n / 8\rfloor]} \frac{1}{i} \mathrm{d} i                                 \\
     & \geq \frac{n^2}{2} \left(\ln\left(\left\lfloor \frac{n}{8} \right\rfloor\right) - \ln \left\lceil \sqrt{n} \right\rceil\right)
    \geq \frac{n^2}{2} \ln \frac{\sqrt{n}}{16}
    \geq \frac{n^2}{64} \ln n .
  \end{align*}
  Moreover, we get
  \begin{align*}
    s
     & \coloneqq \sum\nolimits_{i \in [\sqrt{n} .. n/8 - 1]} \frac{1}{p_i^2}
    \leq \sum\nolimits_{i \in [\sqrt{n} .. n/8 - 1]} \left(\frac{(n / 2 + 1) n}{i}\right)^2
    \leq n^4 \sum\nolimits_{i \in \N_{\geq \sqrt{n}}} \frac{1}{i^2}          \\
     & \leq n^4 \int_{\sqrt{n} - 1}^{\infty} \frac{1}{i^2} \mathrm{d} i
    = n^4 \cdot (n^{-1/2} + 1)
    \leq 2 n^{7/2}.
  \end{align*}
  Thus, by \Cref{thm:geometric-sum-bound}, we get for $\lambda \coloneqq \frac{n^2}{128} \ln n \leq \frac{\mu}{2}$ that
  \begin{align*}
    \Pr[T^{\star} \leq \lambda]
    \leq \Pr[T^{\star} \leq \mu - \lambda]
    \leq \exp\left(-\frac{\lambda^2}{2 s}\right)
    = \exp\left(-\frac{1}{2 \cdot 128^2} \sqrt{n} \ln^2 n\right) .
  \end{align*}
  Together with the probability of $1 - \exp\bigl(-\Theta(n)\bigr)$ of the initial Chernoff bound for the initialization, the overall probability of~$\Xl$ being at least~$\sqrt{n}$ for at least $cn^2 \ln n$ iterations is at least $\bigl(1 - \exp\bigl(-\Theta(n)\bigr)\bigr)\bigl(1 - \exp\bigl(-\Theta(\sqrt{n} \log^2 n)\bigr)\bigr) \geq 1 - \exp\bigl(-\Omega(\sqrt{n})\bigr)$.
  This proves the claim for~$\Xl$.
  The union bound over the cases~$\Xl$ and~$\Xr$ concludes the proof for the modified SEMO.

  \textbf{Modified GSEMO.}
  We follow the same outline as in the case for the modified SEMO above.
  However, as standard bit mutation can flip more than one bit at a time, some of the stopping times we defined before may be identical and thus not independent.
  We circumvent this problem by only considering values of~$\Xl$ where it is very unlikely to make progress by flipping at least~$11$ ones within~$n^3$ iterations.
  Conditional on this event, we then proceed as in the analysis of the modified SEMO, allowing though to reduce~$\Xl$ by up to~$10$ with each improvement.

  Let $T' \coloneqq \inf \{t \in \N \mid \Xl_t < \ceil{\sqrt{n}}\} - \inf \{t \in \N \mid \Xl_t < \floor{2 n^{3/5}}\}$.
  We aim at showing that $\Pr[T' > c n^2 \ln n] \geq 1 - \exp\bigl(-\Theta(\sqrt{n} \log^2 n)\bigr)$.

  Let $t \in \N$, and assume that $\Xl_t \in [\sqrt{n} .. 2 n^{3/5}]$.
  Assuming that the algorithm actually selects an individual in iteration~$t$ (and does not immediately continue with the next iteration), the probability to reduce~$\Xl_t$ when flipping exactly $k \in [11 .. \Xl_t]$ ones is at most
  \begin{equation*}
    \binom{\Xl_t}{k} \frac{1}{n^k} \left(1 - \frac{1}{n}\right)^{X_t - k}
    \leq \left(\frac{\Xl_t}{n}\right)^k
    \leq 2 n^{-2k/5} ,
  \end{equation*}
  which is maximized for $k = 11$.
  Taking a union bound over all at most~$n$ values for~$k$, we see that the probability to make progress by flipping at least~$11$ ones is at most $2 n^{-22/5 + 1} = 2 n^{-17/5}$ during a single iteration (and for the considered range of~$\Xl$).
  Via a union bound over~$n^3$ iterations, the probability to make progress within~$n^3$ iterations by flipping at least~$11$ bits is thus at most $2 n^{-17/5 + 3} = 2 n^{-2/5}$.

  Furthermore, we show that during~$n^3$ iterations, it is very unlikely to flip at least~$\floor{n^{3/5}}$ ones.
  Similar to above, for any value of $\Xl \in [0 .. \frac{n}{2}]$, the probability to flip exactly $k \in [n^{3/5} .. n]$ ones is at most $\binom{\Xl_t}{k} \frac{1}{n^{k}} \leq (\frac{n / 2}{n})^{k} = 2^{-k}$, which is maximized for $k = \floor{n^{3/5}}$.
  Via a union bound over all at most~$n$ values for~$k$ and all~$n^3$ iterations we consider, the probability to flip at least~$\floor{n^{3/5}}$ ones once is at most $n^4 \cdot 2^{-\floor{n^{3/5}}} \leq \exp\bigl(-\Theta(n^{3/5})\bigr)$.
  In the following, we condition on never flipping at least~$\floor{n^{3/5}}$ ones within the first~$n^3$ iterations.

  Like in the case for the modified SEMO, we aim at applying \Cref{thm:geometric-sum-bound} to a well-chosen set of $J \in [\floor{2 n^{3/5}} - \ceil{\sqrt{n}}]$ independent geometric random variables, where~$J$ is a random variable.
  To this end, assume for the following definitions that once $\Xl < \floor{n^{3/5}}$,
  \begin{enumerate}[label=(\alph*)]
    \item\label[assumption]{item:never-more-than-ten-ones-flipped}
          the modified GSEMO \emph{never} reduces~$\Xl$ by flipping at least~$11$ ones, and
    \item\label[assumption]{item:fill-up-the-population}
          the population contains for each $i \in [\Xl .. \frac{n}{2}]$ an individual with $g_2$-value of exactly~$i$.
  \end{enumerate}
  Let $T_0 \coloneqq \inf \{t \in \N \mid \Xl_t < \floor{2 n^{3/5}}\}$, and for all $i \in [J]$, let $T_i \coloneqq \inf \{t \in \N \mid t \geq T_{i-1} \land \Xl_t < \Xl_{T_{i - 1}}\}$, and let $T_J \coloneqq \inf \{t \in \N \mid \Xl_t < \ceil{\sqrt{n}}\}$.
  Last, for all $i \in [J]$, let $D_i \coloneqq T_i - T_{i - 1}$.

  We make three observations:
  \begin{enumerate}
    \item\label[observation]{item:high-starting-value}
          Since we condition on never flipping at least~$\floor{n^{3/5}}$ ones within the first~$n^3$ iterations, we have that if $T_0 \leq n^3$, then $\Xl_{T_0} \geq \floor{2 n^{3/5}} - \floor{n^{3/5}} \geq \floor{n^{3/5}}$.

    \item\label[observation]{item:identical-stopping-times}
          Conditional on the modified GSEMO never reducing~$\Xl$ by flipping at least~$11$ ones during the first~$n^3$ iterations once $\Xl < \floor{n^{3/5}}$, it holds for all $t \in [0 .. n^3 - 1]$ that $\Pr[T' \leq t] \leq \Pr[\sum_{i \in [J]} D_i \leq t]$, because the modified GSEMO without \cref{item:fill-up-the-population} (referring to~$T'$) has at most as many options to reduce~$\Xl$ as the modified GSEMO with \cref{item:fill-up-the-population} within the first~$n^3$ iterations.

    \item\label[observation]{item:independence-of-the-transition-times}
          The variables $\{D_i\}_{i \in [J]}$ are independent, because when~$\Xl$ is reduced, the algorithm always transitions into a state that is independent of the time it took to get there (by \cref{item:fill-up-the-population} above).
          Thus, \Cref{thm:geometric-sum-bound} is applicable to any subset of $\{D_i\}_{i \in [J]}$ of deterministic cardinality, and the probability bound carries over to~$T'$ if we consider bounds within at most~$n^3$ iterations starting from~$T_0$.
  \end{enumerate}

  We aim at choosing a deterministic subset of~$[J]$ that maximizes the probability bound we get from \Cref{thm:geometric-sum-bound}, relying on \cref{item:independence-of-the-transition-times}.
  To this end, let $T^{\star} \coloneqq \sum_{i \in [J]} D_i$, and let $(p_i)_{i \in [J]}$ denote the success probabilities of $(T_i)_{i \in [J]}$.
  We proceed by estimating $\mu \coloneqq \E[T^{\star}]$ and $s \coloneqq \sum_{[i \in [J]]} \frac{1}{p_i^2}$ independently for all feasible deterministic subsets of~$[J]$.
  To this end, we bound for all $i \in [J]$ the success probability~$p_i$ in both directions, recalling that we never make progress by flipping more than~$11$ ones, by \cref{item:never-more-than-ten-ones-flipped}.
  For the lower bound, there is always at least one individual in the population that allows to make progress by flipping exactly~$1$ bit (by \cref{item:fill-up-the-population}) out of the~$\Xl_{T_{i - 1}}$ possible improvements.
  Thus, we have $p_i \geq \frac{1}{n/2 + 1} \cdot \bigl(\Xl_{T_{i - 1}}/n\bigr) (1 - \frac{1}{n})^{9} \geq \frac{1}{e} \cdot \frac{1}{n/2 + 1} \cdot \bigl(\Xl_{T_{i - 1}}/n\bigr)$.
  For the upper bound, there are by \cref{item:fill-up-the-population} always~$10$ individuals in the population that improve~$\Xl_{T_{i - 1}}$ if the appropriate bits are flipped, of which there can be multiple possibilities per individual.
  For our bound, we assume optimistically that for each individual, a $1$-bit flip is sufficient, but we account for it~$10$ times.
  Thus, we obtain $p_i \leq 10 \frac{10}{n/2 + 1} \bigl(\Xl_{T_{i - 1}}/n\bigr) (1 - \frac{1}{n})^{9} \leq \frac{100}{n/2 + 1} \bigl(\Xl_{T_{i - 1}}/n\bigr)$.

  For~$\mu$, we aim at a lower bound.
  Note that by \cref{item:high-starting-value}, each subset we consider needs to have a cardinality of at least $(\floor{n^{3/5}} - \ceil{\sqrt{n}}) / 10$, as the largest improvement reduces~$\Xl$ by~$10$ and as $\Xl_{T_0} \geq \floor{n^{3/5}}$.
  Thus, we get by our upper bound on~$p_i$ and by \Cref{thm:sums-to-integrals}
  \begin{align*}
    \mu
     & = \sum\nolimits_{i \in [J]} \E[D_i]
    \geq \frac{n^2}{100} \sum\nolimits_{i \in [(\floor{n^{3/5}} - \ceil{\sqrt{n}}) / 10]} \frac{1}{\ceil{\sqrt{n}} + 10i}          \\
     & \geq \frac{n^2}{100} \sum\nolimits_{i \in [\ceil{\sqrt{n}} / 10 + 1 .. \floor{n^{3/5}} / 10 - 1]} \frac{1}{10 i}            \\
     & \geq \frac{n^2}{1000} \int_{[\ceil{\sqrt{n}} / 10 + 1, \floor{n^{3/5}} / 10]} \frac{1}{i} \mathrm{d}i                       \\
     & = \frac{n^2}{1000} \left(\ln\left(\frac{\floor{n^{3/5}}}{10}\right) - \ln\left(\frac{\ceil{\sqrt{n}}}{10} + 1\right)\right) \\
     & \geq \frac{n^2}{1000} \ln \frac{\floor{n^{3/5}}}{2 \ceil{\sqrt{n}}}
    \geq \frac{n^2}{1000} \ln \frac{n^{1/10}}{4}
    \geq \frac{n^2}{20000} \ln n .
  \end{align*}
  For~$s$, we aim at an upper bound, using our lower bound on~$p_i$.
  We obtain
  \begin{align*}
    s
     & = \sum\nolimits_{i \in [J]} \frac{1}{p_i^2}
    \leq \sum\nolimits_{i \in [\sqrt{n} .. 2 n^{3/5}]} \left(e \left(\frac{n}{2} + 1\right) \frac{n}{i}\right)^2
    \leq e^2 n^4 \sum\nolimits_{i \in \N_{\geq \sqrt{n}}} \frac{1}{i^2}
    \leq 2 e^2 n^{7/2} .
  \end{align*}

  Applying \Cref{thm:geometric-sum-bound} for $\lambda \coloneqq \frac{n^2}{40000} \ln n \leq \frac{\mu}{2}$, we get
  \begin{equation*}
    \Pr[T^{\star} \leq \lambda]
    \leq \Pr[T^{\star} \leq \mu - \lambda]
    \leq \exp\left(-\frac{\lambda^2}{2s}\right)
    \leq \exp\left(-\frac{1}{40000^2 \cdot 4e^2} \sqrt{n} \ln^2 n\right) .
  \end{equation*}
  By \cref{item:identical-stopping-times}, since $\lambda < n^3$, this bound carries over to~$T'$.

  Combining the converse probability of at least $1 - \exp\bigl(-\Theta(\sqrt{n} \log^2 n)\bigr)$ with the conditional probability of at least $1 - n^{-2/5}$ that we never make progress by flipping more than~$11$ bits within the first~$n^3$ iterations (for the appropriate range of~$\Xl$) as well as with the conditional probability of at least $1 - \exp\bigl(-\Theta(n^{3/5})\bigr)$ that $\Xl_{T_0} \geq \floor{n^{3/5}}$ and with the conditional probability of at least $1 - \exp\bigl(-\Theta(n)\bigr)$ from the Chernoff bound for the initialization, we conclude that the overall probability that $\Xl \geq \sqrt{n}$ for~$\lambda$ iterations is at least $1 - \Theta(n^{-2/5})$.
  This is in the desired order and thus concludes the case for~$\Xl$.
  Taking the union bound for the cases~$\Xl$ and~$\Xr$ concludes the case for the GSEMO and thus the proof.
\end{proof}

\idealizedProcessLowerBound*

\begin{proof}
  We proceed similar to the proof of \Cref{lem:time-for-expanding-toward-the-PF-borders} and consider the distance of the algorithm to the Pareto borders,~$d_{\mathrm{PF}}$, defined just before \Cref{lem:time-for-expanding-toward-the-PF-borders}.
  That is, we disregard whether the individuals we choose as parents are Pareto-optimal or not.
  We aim at bounding the number of iterations until this value is zero, that is, $S \coloneqq \inf \{t \in \N \mid d_{\mathrm{PF}}(P^{(t)}) = 0\}$.
  Note that~$S$ is stochastically dominated by the actual runtime of the algorithm, since a distance of zero to the Pareto borders is a necessary condition for the algorithm to cover the Pareto front.

  Formally, we consider $(X_t)_{t \in \N}$ such that for all $t \in \N$, we have $X_t \coloneqq d_{\mathrm{PF}}(P^{(t)})$.
  We aim at applying \Cref{thm:geometric-sum-bound} to the waiting times for~$X$ to decrease, which we define later.
  To this end, we only discuss global mutation, noting that the case for one-bit mutation works the same but simplifies many of the arguments (and leads to slightly better transition probabilities).

  Before we define the waiting times, we determine how many of them we need to consider with high probability.
  To this end, we first prove that with high probability, the algorithm does not decrease~$X$ by at least $\ceil{\sqrt{n} / 2}$ in a single iteration within the first~$n^3$ iterations.
  Note that this event implies that~$X$ does not skip the interval $[\floor{\sqrt{n} / 2} .. \ceil{\sqrt{n}}]$ within the first~$n^3$ iterations, recalling that we assume $X_0 \geq \ceil{\sqrt{n}}$.
  We bound the probability for iteration $t \in [0 .. n^3 - 1]$ that~$X_t$ decreases by at least~$\ceil{\sqrt{n} / 2}$.
  In order to decrease~$X_t$ by at least $\ceil{\sqrt{n} / 2}$, regardless of which individual we choose as parent, we
  have two possibilities: The first one is to flip some bits of the~$X_t$ bits of one value of which the parent has fewer.
  Since we need to decrease~$X_t$ by at least~$\ceil{\sqrt{n} / 2}$ and since there are at most~$X_t$ bits to choose from, this requires to flip $k \in [\floor{\sqrt{n} / 2} .. X_t]$ of these bits.
  Thus, recalling that by definition follows that $X_t \leq \floor{n / 2} / 2 \leq \frac{n}{4}$, the probability to do so is at most
  \begin{equation*}
    \binom{X_t}{k} \frac{1}{n^k}
    = \frac{1}{n^k} \cdot \prod\nolimits_{i \in [k]} \frac{X_t - i + 1}{i}
    \leq \left(\frac{X_t}{n}\right)^k
    \leq 4^{-k} ,
  \end{equation*}
  which is maximized for $k = \floor{\sqrt{n} / 2}$.

  The second possibility is to flip bits among the other $\floor{n / 2} - X_t$ bits.
  Flipping all $\floor{n / 2} - X_t$ of these bits results in $X_{t + 1} = 0$.
  And in order to make sure that $X_{t + 1} \leq X_t - \ceil{\sqrt{n} / 2}$, there must remain at most $X_t - \ceil{\sqrt{n} / 2}$ bits of this value after flipping, thus requiring to flip at least $\floor{n / 2} - X_t - (X_t - \ceil{\sqrt{n} / 2})$ bits.
  Thus, this case requires to flip $\ell \in [\floor{n / 2} - 2 X_t + \floor{\sqrt{n} / 2} .. \floor{n / 2} - X_t]$ bits.
  Hence, the probability for this case is at most
  \begin{equation*}
    \binom{\floor{n / 2} - X_t}{\ell} \frac{1}{n^\ell}
    = \frac{1}{n^\ell} \cdot \prod\nolimits_{i \in [\ell]} \frac{\floor{n / 2} - X_t - i + 1}{i}
    \leq \left(\frac{n/2}{n}\right)^\ell
    = 2^{- \ell} ,
  \end{equation*}
  which is maximized for $\ell = \floor{\sqrt{n} / 2}$.

  Combined with the other probability, this results in an overall probability of at most $2 \cdot 2^{-\floor{\sqrt{n} / 2}}$ for iteration~$t$.
  Via a union bound over all~$n^3$ iterations, the probability that~$X$ is never decreased by at least $\ceil{\sqrt{n} / 2}$ in a single iteration within the first~$n^3$ iterations is at least $1 - n^3 \cdot 2 \cdot 2^{-\floor{\sqrt{n} / 2}} \geq 1 - \exp\bigl(-\Theta(\sqrt{n})\bigr)$.
  In the following, we condition on this event.

  Similar to the calculations above, we show that with high probability, the (G)SEMO does not decrease~$X$ during the first~$n^3$ iterations by at least~$8$ in a single iteration once its value is at most~$\ceil{\sqrt{n}}$.
  Let $t \in [0 .. n^3 - 1]$ be the current iteration we consider.
  The probability to decrease~$X_t$ by at least~$8$ bits, regardless of which individual we choose as parent, requires to flip at least $k \in [8 .. X_t]$ out of the~$X_t$ bits that decrease~$X_t$, or to flip at least $\ell \in [\floor{n / 2} - 2 X_t + 8 .. \floor{n / 2} - X_t]$ bits of the other value.
  Thus, the probability to make progress of at least~$8$ during a single mutation is, similar to above, at most $2 \sqrt{n}^{-8} = 2 n^{-4}$.
  A union bound over all~$n^3$ iterations yields that~$X$ is never decreased by at least~$8$ in a single iteration during this time with probability at least $1 - n^3 \cdot 2 n^{-4} = 1 - 2n^{-1}$.
  In the following, we also condition on this event.

  We proceed with defining the waiting times we aim at bounding.
  We denote the random number of decreases of~$X$ (and thus the number of waiting times we aim to bound) by $J \in [\ceil{\sqrt{n}}]$.
  Let $T_0 \coloneqq \inf \{t \in \N \mid X_t \leq \ceil{\sqrt{n}}\}$, and for all $i \in [J - 1]$, let $T_i \coloneqq \inf \{t \in \N \mid t \geq T_{i - 1} \land X_t < X_{T_{i - 1}}\}$, and let $T_J \coloneqq \inf \{t \in \N \mid t \geq T_{J - 1} \land X_t = 0\}$.
  Furthermore, for all $i \in [J]$, let $D_i \coloneqq T_i - T_{i - 1}$.
  The $\{D_i\}_{i \in [J]}$ are our waiting times of interest, noting that $S = T_0 + \sum_{i \in [J]} D_i$.
  For convenience, we define $T^\star \coloneqq \sum_{i \in [J]} D_i$.

  In order to consider independent geometric random variables, we consider a \emph{variant} of the (G)SEMO that is identical to the original (G)SEMO as long as~$X$ is larger than~$\ceil{\sqrt{n}}$.
  Once~$X$ is at most~$\ceil{\sqrt{n}}$, the variant decreases~$X$ always by at most~$7$ in a single iteration, and whenever~$X$ decreases, the population of the variant contains individuals in the population with a distance of $i \in [X .. X + 6]$ to either Pareto border in the population; the variant takes the~$13$ individuals that were not created by mutation from other parts of the population that have a larger distance to the Pareto borders (noting that we have enough individuals to do so, as we assume that the population size is already linear in~$n$).
  The variant always picks the same individuals for mutation as the original algorithm, with the difference that if the original algorithm picks an individual that was swapped closer to the Pareto borders in the variant, the variant picks this updated individual instead.
  If the original algorithm picks an individual that is not present in the variant (which happens if the original algorithm decreases~$X$ by at least~$8$ in a single iteration), then the variant chooses a parent uniformly at random from its population.
  We use the same notation for the variant as we use for the original algorithm, but we use the indices ``$\mathrm{orig}$'' for the original algorithm and ``$\mathrm{var}$'' for the variant if it is not clear from context which algorithm we are referring to.

  For the variant introduced above, the variables $\{D_i\}_{i \in [J]}$ are independent, as each improvement of~$X$ results in the population containing the same individuals that can further improve~$X$, regardless of how the progress was achieved.
  Moreover, conditional on the original (G)SEMO never decreasing~$X$ by at least~$8$ during the first~$n^3$ iterations, since the variant has at least as many possibilities to decrease~$X$ as the original (G)SEMO in each iteration, we have for all $t \in [0 .. n^3 - 1]$ that $\Pr[S_{\mathrm{orig}} \leq t] \leq \Pr[S_{\mathrm{var}} \leq t]$.
  We continue by bounding $\Pr[S_{\mathrm{var}} \leq t]$ from above by \Cref{thm:geometric-sum-bound} for any feasible value of~$J$ by considering $\{D_i\}_{i \in [J]}$ for the (G)SEMO variant, noting that the latter follow each a geometric distribution, with success probabilities $\{p_i\}_{i \in [J]}$.
  For all $i \in [J]$, we bound~$p_i$ from above and below, recalling that we assume that the population size of the (G)SEMO and thus of the variant is already $\Theta(n)$.
  For the lower bound, we pick one specific individual with the smallest distance to a Pareto border as parent and flip exactly one bit such that~$X_{T_{i - 1}}$ decreases.
  Hence, we have $p_i \geq \frac{1}{\Theta(n)} (X_{T_{i - 1}} / n) (1 - \frac{1}{n})^6 \geq X_{T_{i - 1}} / \Theta(n^2)$.
  For the upper bound, we note that we always have~$14$ individuals ($7$ per Pareto border) that can be improved by flipping any appropriate number of bits.
  We assume that for each individual, flipping a single bit is sufficient, and we account for this probability~$7$ times per individual (instead of the appropriate numbers).
  Thus, we get $p_i \leq 7 \cdot \frac{14}{\Theta(n)} (X_{T_{i - 1}} / n) (1 - \frac{1}{n})^6 \leq X_{T_{i - 1}} / \Theta(n^2)$.
  Overall, we have $p_i = X_{T_{i - 1}} / \Theta(n^2)$.

  We continue by bounding $\mu \coloneqq \E[T^\star]$ from below.
  To this end, we note that since the variant only decreases~$X$ by at most~$7$ in each iteration, any realization of a run of the variant satisfies that $J \geq \floor{\ceil{\sqrt{n}} / 7}$.
  Thus, we bound by linearity of expectation and by \Cref{thm:sums-to-integrals},
  \begin{align*}
    \mu
    = \E\left[\sum\nolimits_{i \in [J]} D_i\right]
     & = \sum\nolimits_{i \in [J]} \E[D_i]
    \geq \Theta(n^2) \sum\nolimits_{i \in [\floor{\ceil{\sqrt{n}} / 7} - 1]} \frac{1}{7 i} \\
     & \geq \Theta(n^2) \int_{[1, \floor{\ceil{\sqrt{n}} / 7}]} \frac{1}{7i} \textrm{d} i
    \geq \Theta(n^2 \log n) .
  \end{align*}

  Moreover, we bound $s \coloneqq \sum_{i \in [J]} \frac{1}{p^2_i}$ from above as
  \begin{equation*}
    s
    = \sum\nolimits_{i \in [J]} \frac{1}{p^2_i}
    \leq \Theta(n^4) \sum\nolimits_{i \in [\ceil{\sqrt{n}}]} \frac{1}{i^2}
    \leq \Theta(n^4) \sum\nolimits_{i \in \N_{\geq \sqrt{n}}} \frac{1}{i^2}
    = \Theta(n^{7/2}).
  \end{equation*}

  Applying \Cref{thm:geometric-sum-bound} with $\lambda \coloneqq \Theta(n^2 \log n)$ such that $\lambda \leq \frac{\mu}{2}$, we get
  \begin{equation*}
    \Pr[T^\star \leq \lambda]
    \leq \Pr[T^\star \leq \mu - \lambda]
    \leq \exp\left(-\frac{\lambda^2}{2 s}\right)
    \leq \exp\bigl(-\Theta(\sqrt{n} \log^2 n)\bigr) .
  \end{equation*}

  Since $T^\star \leq S$, it follows that $\Pr[S \leq \lambda] \leq \Pr[T^\star \leq \lambda] \leq \exp\bigl(-\Theta(\sqrt{n} \log^2 n)\bigr)$.
  A union bound over this failure probability as well as the two failure probabilities of the events that we condition on yields that $\Pr[S > \lambda] \geq 1 - \exp\bigl(-\Theta(-\sqrt{n})\bigr) - 2 n^{-1} - \exp\bigl(-\Theta(\sqrt{n} \log^2 n)\bigr) \geq 1 - \Theta(n^{-1})$.
  This concludes the proof.
\end{proof}

\section{Runtime Analysis on OMM and OJZJ}
\label{sec:appendix:ommAndOjzj}

\subsubsection{\omm and \ojzj}

\timeToParetoFrontOMMOJZJ*

\begin{proof}
  The proof is very similar to that of \Cref{lem:time-to-Pareto-front}. 
  Denote by the random variable $T$ the time until there are at least $n/2$ individuals (which are also on the Pareto front). We show that $\Pr[T \geq 8en^2]=\exp(-\Omega(n))$.
  As long as $\vert{P^{(t)}}\vert < n/2$ there are two cases no matter which objective function is considered.
  \begin{itemize}
    \item[(1)] Suppose that there is an individual $x' \in P^{(t)}$ with $\ones{x'} \leq \floor{n/4}$ (i.e. $\ones{x'} = \floor{n/4}$ for $\ojzjk$). Since $\vert{P^{(t)}}\vert < n/2$, one finds $m \in [0,n/2]$ such that there is no individual $y' \in P^{(t)}$ with $\ones{y'}=n/4+m$. Hence, there is an individual $x \in P^{(t)}$ with $\ones{x} < n/2+\floor{n/4}$, but no $y$ with $\ones{y}=\ones{x}+1$.
    \item[(2)] Suppose that there is no individual $x' \in P^{(t)}$ with $\ones{x'} \leq \floor{n/4}$. Then one finds an individual $x \in P^{(t)}$ with $\ones{x}>n/4$, but no individual $y$ with $\ones{y}=\ones{x}-1$.
  \end{itemize}
  In both cases such a missing individual $y$ can be created by selecting $x$ as a parent and flipping one out of $\lceil{n/4}\rceil \geq n/4$ bits where the remaining bits are kept unchanged which happens (for both modified algorithms and problem classes) with probability at least $\frac{1}{(n+1)4e} \eqqcolon r$. Hence, $T$ is stochastically dominated by the sum of $\lceil{n/2}\rceil-1$ geometrically distributed random variables $V_1, \ldots , V_{\lceil{n/2}\rceil-1}$ with success probability $r$. Let $V\coloneqq \sum_{i=1}^{\lceil{n/2}\rceil-1}V_i$. Note that
  \[\mathbb{E}[V]=(\lceil{n/2}\rceil-1)(4en+4e) \leq n/2 \cdot (4en+4e) = 2en^2 + 2en \leq 4en^2.\]
  With \Cref{thm:geometric-sum-bound} we obtain for $\lambda\coloneqq 4en^2$, and $s\coloneqq (\lceil{n/2}\rceil-1)/r^2=16e^2(\lceil{n/2}\rceil-1)(n+1)^2$
  \begin{align*}
    \Pr[T \geq 8en^2] & \leq \Pr[V\geq 8en^2] \leq \Pr[V \geq \mathbb{E}[V]+4en^2]                                                       \\
                      & \leq \exp \left(-\frac{1}{4}\min\left\{\frac{\lambda^2}{s},\lambda r\right\}\right) = \exp(-\Omega(n)). \qedhere
  \end{align*}
\end{proof}

\timeForExpandingTowardThePFBordersOMMOJZJ*

\begin{proof}
  At first we investigate the modified (G)SEMO maximizing $\ojzjk$ for $\sqrt{n} \leq k < n/4$. We can assume that $n/4 \leq \ones{x^{(0)}} \leq 3n/4$ (by a Chernoff bound this happens with probability at least $1-\exp(-\Omega(n))$). Hence, the distance to the Pareto borders is at least $\sqrt{n}$ and one can decrease this distance only by flipping $\sqrt{n}$ specific bits at once which happens with probability at most $n^{-\sqrt{n}}$ (for both modified algorithms). By a union bound on $c n^2 \ln n$ generations for any constant $c$, we obtain the result. \\
  Now we investigate the modified (G)SEMO maximizing \omm or $\ojzjk$ for $1 < k < \sqrt{n}$.
  The proof is very similar to the proof of \Cref{lem:time-for-expanding-toward-the-PF-borders} with the minor difference that we consider $(X_t^{\leftarrow})_{t \in \mathbb{N}}$ with $X_t^{\leftarrow} = \min_{z \in P^{(t)}} \ones{z}$ and analogously $(X_t^{\rightarrow})_{t \in \mathbb{N}}$ with $X_t^{\rightarrow} = \min_{z \in P^{(t)}} n-\ones{z}$. Note that $d_{\text{PF}}(P^{(t)}) = \min\{X_t^{\rightarrow},X_t^{\leftarrow}\}$. By completely the same arguments from \Cref{lem:time-for-expanding-toward-the-PF-borders} for both modified algorithms, we obtain with probability $1-\Theta(n^{-2/5})$ that $X^{\leftarrow}_t$ is at least $\sqrt{n}$ for at least $cn^2 \ln n$ generations for a sufficiently small constant $c$.

  For $\ojzjk$, in case of $k > \sqrt{n}$, we note that the distance of the algorithm cannot attain, with the same probability bound as above, values outside of $[k .. n - k]$ because all individuals with a number of~$1$s outside of this interval (besides the all-$1$s and the all-$0$s bit string) are dominated by any solution inside this interval.
  Since the proof above shows that the algorithm only makes (with high probability) changes of at most~$\sqrt{n}$ to the number of~$1$s, the extremal solutions of the Pareto front cannot be reached during the considered time.
\end{proof}

\subsubsection{\omm}

\idealizedProcessLowerBoundOMM*

\begin{proof}
  As in the proof of \Cref{lem:idealized-process-lower-bound} we consider $S\coloneqq \inf\{t \in \mathbb{N} \mid d_{\text{PF}}(P^{(t)}) =0\}$ which is stochastically dominated by the actual runtime of the algorithm, since a distance of zero to the Pareto borders is a necessary condition for the algorithm to cover the whole Pareto front. By using the same arguments from \Cref{lem:idealized-process-lower-bound}, one can show that there is a sufficiently small constant $d$ such that $S>dn^2\log(n)$ with probability at least $1-\Theta(n^{-1})$.
\end{proof}

\gsemoLowerBoundOMM*

\begin{proof}
  By \Cref{lem:time-to-Pareto-front-OMM-OJZJ,lem:time-for-expanding-toward-the-PF-borders-OMM-OJZJ}, it follows with probability at least $1 - \Theta(n^{-1})$ that the (G)SEMO reaches the Pareto front with a population size of at least~$n / 2$ while the distance to the Pareto borders is still at least~$\sqrt{n}$.
  By \Cref{lem:idealized-process-lower-bound-OMM}, it takes the (G)SEMO $\Omega(n^2 \log n)$ objective-function evaluations until it covers the entire Pareto front, concluding the proof.
\end{proof}

\subsubsection{\ojzj}

\idealizedProcessLowerBoundOJZJ*

\begin{proof}
  In order to create $1^n$, one has to choose an individual $x$ with $\ones{x}=n-k$ and flip $k$ specific bits or another individual $y$ with $\ones{y}<n-k$ and flip $\ones{y} \geq k+1$ specific bits. This happens with probability at most $1/\Theta(n) \cdot 1/n^k + 1/n^{k+1} = O(1/n^{k+1})$ which gives $\Omega(n^{k+1})$ objective evaluations in expectation.
\end{proof}

\gsemoLowerBoundOJZJ*

\begin{proof}
  By \Cref{thm:chernoff-lower-bound}, with probability at least $1 - \exp(-\Theta(n))$, the initial solution of the (G)SEMO has at least~$k$ ones and at most $n - k$ ones.
  Then, as in the proof of \Cref{thm:gsemo-lower-bound-OMM}, the (G)SEMO reaches with probability at least $1 - \Theta(n^{-1})$ the Pareto front with a population size of at least~$n / 2$.
  The lower bound follows then by \Cref{lem:idealized-process-lower-bound-OJZJ}.
\end{proof}

\end{document}